\documentclass[10pt,journal,compsoc]{IEEEtran}

\ifCLASSOPTIONcompsoc
  \usepackage[nocompress]{cite}
\else
  \usepackage{cite}
\fi

\usepackage{amsmath,amssymb,amsthm}

\usepackage{microtype}
\usepackage{graphicx}
\usepackage{booktabs} % for professional tables
\usepackage[mathscr]{euscript}
\usepackage{algorithm2e}
\usepackage[pagebackref]{hyperref}
\usepackage{placeins}
\usepackage{float}
\usepackage{mathtools}
\usepackage{listings}
\usepackage{xcolor}
\usepackage{graphicx}
\usepackage{cleveref}

\newcommand{\NAN}{n/a}

\newtheorem{theorem}{Theorem}[section]

\newtheorem{corollary}[theorem]{Corollary}

\newcommand{\masknet}{\Psi}
\newcommand{\frozen}{\Phi}

\newcommand{\unet}{U-Net}
\newcommand{\simm}{\ensuremath{\operatorname{\delta}_\frozen}}
\newcommand{\mask}{\ensuremath{\operatorname{\omega}}}
\newcommand{\maskins}{\ensuremath{\operatorname{\omega}_{\text{ins}}}}
\newcommand{\maskdel}{\ensuremath{\operatorname{\omega}_{\text{del}}}}
\newcommand{\maskrand}{\ensuremath{\operatorname{\omega}_{\text{rnd}}}}
\newcommand{\I}{\ensuremath{\mathcal{I}}}
\renewcommand{\I}{\ensuremath{[d]}}
\newcommand{\J}{\mathcal{J}}
\newcommand{\E}{\mathbb{E}}
\newcommand{\ic}{\ensuremath{c_\text{ins}}}
\newcommand{\dc}{\ensuremath{c_\text{del}}}
\newcommand{\w}{w}
\newcommand{\outdim}{c}
\newcommand{\fa}{a}
\newcommand{\fr}{\sigma_a}
\newcommand{\frInv}{\sigma^{-1}_a}
\newcommand{\loss}{\mathcal{L}}
\newcommand{\pow}{\mathscr{P}}

\newcommand{\gum}{g}

\newcommand{\ranem}{Ra-NEM}
\newcommand{\rise}{RISE}
\newcommand{\gradcam}{Grad-CAM++}
\newcommand{\intgrad}{Integrated Gradients}
\newcommand{\gradshap}{Grad-SHAP}
\newcommand{\spm}{Smooth Pixel Mask}
\newcommand{\saliency}{Saliency}
\newcommand{\nemt}{NEMt}
\newcommand{\iba}{IBA}

\lstdefinestyle{python}{
    language=Python,
    basicstyle=\ttfamily\footnotesize,     % Font style and size
    keywordstyle=\bfseries\color{blue},   % Keywords in bold blue
    commentstyle=\itshape\color{green!50!black}, % Comments in italic green
    stringstyle=\color{orange},           % Strings in orange
    showstringspaces=false,               % Don't show spaces in strings
    breaklines=true,                      % Automatic line breaking
    frame=none,                         % Frame around the code
    numbers=left,                         % Line numbers on the left
    numberstyle=\tiny\color{gray},        % Style of the line numbers
    tabsize=4,                            % Tab size
    morekeywords={logsumexp, detach,zeros_like,sort,log,rand_like}, 
}

\usepackage{hyperref}

\begin{document}

% ---------------------------------------------------------------
% TODO REVIEW: Replace with your title
\title{For Those Who Believe in Faithfulness: 
Optimizing the Area Under Insertion and Deletion Curves for Ranking Relative Feature Importance} 
\author{Bjørn Leth Møller, Bulat Ibragimov, Christian Igel
% <-this % stops a space
\IEEEcompsocitemizethanks{\IEEEcompsocthanksitem B. Leth Møller was and B. Ibragimov and C. Igel are with the Department
of Computer Science, University of Copenhagen, Denmark.\protect\\
% note need leading \protect in front of \\ to get a newline within \thanks as
% \\ is fragile and will error, could use \hfil\break instead.
E-mail: \{bjm, bulat, igel\}@di.ku.dk}}

\IEEEtitleabstractindextext{%
\begin{abstract}
  The adoption of machine learning for socially relevant tasks requires effective explainable artificial intelligence (XAI) methods to better understand the behavior of machine learning models. 
Attribution methods are a popular XAI approach in which input-output relationships are characterized by heat maps that reflect the relative importance of input features for a particular prediction. 
The quality of such maps is often assessed by measuring faithfulness  based on the area under  insertion and deletion curves, which measures changes in the model output as features are
added and removed. In this study, we derive an objective function from this notion of faithfulness and a way to approximate its gradient.
We establish the connection between insertion curves and top-$k$ feature selection, which leads to a loss function measuring the quality of attributions. Randomization of the loss allows us to efficiently approximate its gradient. To show the effectiveness of the general approach, we combine the loss function with the neural explanation mask framework.
The resulting method, termed Ra-NEM, can be used with any differentiable model without affecting the model's performance. 
Experiments demonstrate that Ra-NEM provides accurate attributions robustly and efficiently. Compared to other algorithms, 
the attributions have not only higher faithfulness but also perform well in terms of other XAI metrics. The high inference speed of Ra-NEM makes the method suitable for online applications. \href{https://github.com/baerminator/Ra_Nem}{The code is available online.}
%\emph{Code is available as supplementary material.}

\end{abstract}
% Note that keywords are not normally used for peerreview papers.
\begin{IEEEkeywords}
Explainable AI, Smooth top-$k$ feature selection, Occlusion based explanations
\end{IEEEkeywords}}

\maketitle
% <bulat@di.ku.dk> <bjm@di.ku.dk>

\section{Introduction}
\label{sec:introduction}
% General introduction
Explainable AI (XAI) aims to clarify how and why a model produces specific outputs from given inputs. This insight is needed to provide a higher level of transparency and safety in real-world applications \cite{gerlings2020reviewing} and is increasingly emphasized in regulatory frameworks \cite{chung2024false}. 
A wide range of tasks fall under the umbrella of XAI. One of the most fundamental ones is to estimate the importance of individual input features for the model prediction, usually through a feature map, also referred to as an \textit{attribution}, which assigns to each input feature some importance score. While verification of attribution maps is still an open research question, one quality commonly measured during evaluation is \textit{faithfulness}, which attempts to quantify how well the attribution aligns with the model's behavior for a given input-output pair. 
Faithfulness, in turn, is often assessed through \textit{deletion} and \textit{insertion curves} \cite{petsiuk2018rise,pmlr-v235-muzellec24a,fong2017interpretable}. Introduced in \cite{petsiuk2018rise}, the insertion curve measures changes in the model output as features are added in order of highest importance. A higher area under the curve (AUC) indicates better explanations. 
Similarly, the deletion curve tracks how much the output changes when the features are removed in reverse order. In this case, a low AUC indicates a good attribution. 
Although many different heuristics have been proposed to solve the attribution problem \cite{abhishek2022attributionbasedxaimethodscomputer}, none, to our knowledge, has approached it by optimizing the insertion and/or deletion curve directly. Assuming that faithfulness defined based on these curves is a desirable property of an attribution, directly optimizing the area under these curves would be a principled way of generating high-quality attributions.
In the following, we present an objective function  that can be used to directly optimize the AUCs  and an efficient approximation of its gradient that allows for real-time applications. The objective can be combined with other loss terms; however, we empirically show that the objective function alone is sufficient to provide faithful attributions that score high in terms of Average Sensitivity \cite{Robustness} and the Minimal Deletion Set metric  \cite{fong2017interpretable}, passing the MPRT (Model Parameter Randomization Test) \cite{Randomization}.

This study
\textbf{(i)} establishes a link  between top-$k$ feature selection with variable $k$ and the insertion and deletion curves;
    \textbf{(ii)} states the sample complexity of Monte Carlo approximations of the area under these curves;  \textbf{(iii)} proposes a differentiable operator for feature masking depending on the top-$k$ features selected via the ``Gumbel-top trick''~\cite{pmlr-v97-kool19a};
    \textbf{(iv)} adds the new operator to the Neural Explanation Mask (NEM) framework \cite{nemu} yielding \emph{Ranking-NEM} (\ranem), which produces sparse faithful explanations with low latency; and
    \textbf{(v)} experimentally evaluates \ranem{} applied to different deep learning architectures for image classification in comparison to state-of-the-art attribution methods, demonstrating that the new approach and performs, on average, superior in terms of faithfulness, inference speed,  and robustness.

\section{Related work}
The attribution of model input-output relations is a well-established and growing field. In the following, we provide a brief overview before explaining the NEM framework in more detail.
\subsection{Heuristics for solving the attribution problem}
\begin{figure}[ht]
    \centering\includegraphics[width=\linewidth]{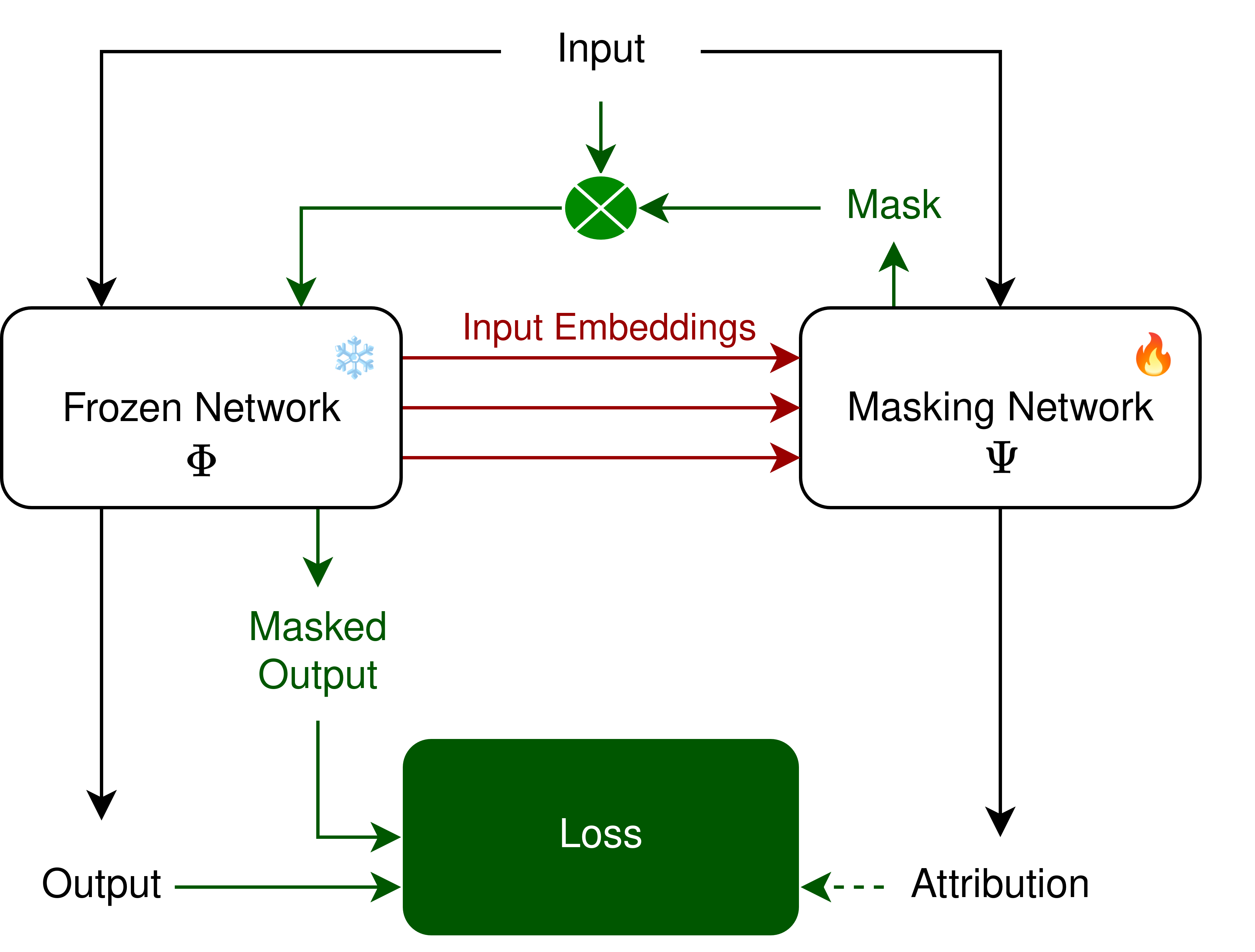}
    %\vspace{-1pt}
    \caption{The general Neural Explanation Method (NEM) framework's operation during inference (black and red arrows) and training (black, red, and green arrows). During inference, the input is processed by the frozen network, which generates its output, and the masking network, which produces an attribution (explanation). Depending on the NEM architecture, representations from the frozen network may assist the masking network in generating the attribution. During training, the masking network generates a mask from the attribution. In \cite{nemu,moller2024nemt}, this mapping is the identity,  and in this work it is a differentiable top-$k$ feature selector. The mask is applied to the input and passed through the frozen network to produce a masked output. Both masked and unmasked outputs are used in a loss function to optimize the masking network. Prior work has also incorporated the attribution into the loss.}
    \label{fig:framework}
\end{figure}
\label{sec:related_work}
% OCCLUSION VS gradient-based METHODS 
Many heuristics have been proposed to solve the attribution problem for different modalities such as images, texts, graphs, etc.~\cite{yang2023survey}. The approaches, especially in the context of computer vision, can be roughly divided into two families, occlusion- and gradient-based methods \cite{fong2017interpretable}. The latter leverages the gradient of the given model w.r.t.{} the input to determine the sensitivity of the output to the individual input features. These methods include Saliency \cite{Saliency}, GradSHAP \cite{lundberg2017unified}, GradCAM++ \cite{chattopadhay2018grad}, and Integrated Gradients \cite{sundararajan2017axiomatic}. The low latency of such methods usually comes with low faithfulness, especially for some specific neural architectures such as CNNs~\cite{pmlr-v235-muzellec24a}. The occlusion-based methods introduce perturbation/occlusion to the input and measure the changes in the output to determine the importance of various features. Examples include RISE~\cite{petsiuk2018rise}, Information Bottleneck Attribution (IBA, \cite{Schulz2020Restricting}), and Smooth Pixel Mask \cite{fong2019understanding}. Generally, occlusion-based methods exhibit good faithfulness at the cost of slower inference speed. A notable exception to this rule is the NEMt method \cite{moller2024nemt}, an instantiation of the Neural Explanation Mask (NEM) framework \cite{nemu}, which achieves occlusion-based masking at low latency by predicting a mask directly instead of running a costly optimization process for each input as done by other occlusion-based methods. 
%  
% ADDITIVE VS SET-OF-FEATURES AND L2X
XAI methods can also be categorized according to whether they generate additive feature explanations or set-of-feature explanations \cite{fong2019understanding, nemu}. Additive feature explanations provide attributions which rank individual features based on their importance. Examples of such methods include GradCAM \cite{chattopadhay2018grad}, RISE \cite{petsiuk2018rise}, and Integrated Gradients~\cite{sundararajan2017axiomatic}. In contrast, set-of-feature explanations identify the minimal subset of features that produce a similar output to the full feature set. Unlike additive explanations, these methods do not rank every feature, making them less suitable for evaluation with deletion and insertion curves. Examples include Smooth Pixel Mask \cite{fong2019understanding}, NEMt~\cite{moller2024nemt}, and Learn2Explain~\cite{L2X}. L2X uses top-$k$ feature selection, employing the Gumbel softmax operator iteratively to identify up to $k$ important features, where $k$ ranges from 4 to 10 in the original paper. To our knowledge, the framework has not been tested for larger values of $k$.

\subsection{Neural Explanation Masks}
\label{sub_sec:neural_explanation_masks}
% Introduce the NEM framework and the NEM_U architecture
Our approach extends the \emph{{N}eural {E}xplanation {M}ask} (NEM) framework \cite{nemu}. The idea of NEM is to augment a given differentiable model $\frozen$ with an explanation module $\masknet$ that generates a feature attribution $a = \masknet(x)$ for the model output $y = \frozen(x)$;  see \autoref{fig:framework} for an overview of the NEM framework.
%To achieve low latency, 
The explanation module $\masknet$ is trained on a corpus of (unlabelled) data. When deployed, providing an explanation corresponds to (simultaneously) evaluating $\masknet$, which leads to low latency.
%, allowing it to infer an explanation $m$ given new input. Note that 
The models $\frozen$ and $\masknet$ are {not} optimized in tandem,  which is why $\frozen$ is also called the \textit{frozen model}. This is in contrast to other occlusion-based methods, which usually need to perform a separate optimization process for each new input to generate an explanation. 
Different NEM architectures \cite{nemu,moller2024nemt} can be derived depending on the architecture of $\masknet$ and how $\masknet$ is integrated into the explained model $\frozen$. Previous work has focused on NEM-U (NEM using \unet{} \cite{ronneberger2015u}), where $\frozen$ and $\masknet$ are viewed as the encoder and decoder of a \unet{} architecture, respectively, meaning that $\masknet$ utilizes skip connections to extract intermediate input representations from $\frozen$. 
It is appealing that the NEM framework does not interfere with training and using $\frozen$, that is,
there is no need to adapt the model architecture and no risk of degrading the predictive performance.
All current NEM methods require us to choose hyperparameters that specify the trade-off between accuracy and complexity.
%, which requires the user to either experiment or have a good idea about the structure of the models latent space.
Furthermore, all previous NEM methods produce a set of important features. They can only determine whether a feature is important or not, but not how important it is compared to others. Thus, no previous NEM methods can be used to reliably determine the $k$ most important features for a given $k$.

\section{Methodology}
\label{sec:methodology}
This section presents our theoretical results and derives an XAI algorithm based on these results. First, we establish the relationship between the insertion and deletion curves and top-$k$ feature selection. This discussion lends itself to an objective function for learning feature attributions. To make the objective tractable, we use Monte Carlo approximation, and to make it differentiable, we employ
the ``Gumbel top-$k$ trick'' for sampling without replacement.
Then we show how the Neural Explanation Mask framework can be extended to our new \emph{Ranking-NEM} (\ranem) architecture using the derived objective function. 
\subsection{Approximating the insertion curve via top-\textit{k} feature selection}
\label{sub_sec:ranking_via_feature_selection}
The idea is to generate feature attributions by optimizing the AUC of the corresponding insertion (and/or deletion) curve to maximize the {faithfulness} of the attribution directly. 
Let $x \in \mathbb{R}^d$ be some $d$-dimensional input (e.g., a flattened image) and {$\frozen: \mathbb{R}^d \to \mathbb{R}^\outdim$}  some fixed model with $\outdim$-dimensional output  for which we want to provide explanations.
This $\outdim$-dimensional output could be the logits for $\outdim$ classes but also any other representation learned by $\frozen$, in particular some hidden layer embedding in a deep neural network.
The set of indices corresponding to the $d$ input features is denoted by $\I =\{1, \dots,d \}$.
For a given input $x$, a \emph{feature attribution} is a vector $\fa\in\mathbb{R}^d$, where $\fa_i < \fa_j$ indicates that the feature $i$ is considered more relevant than the feature $j$. Each feature attribution $\fa$ defines the corresponding \emph{feature ranking} $\sigma$. We describe the ranking by the permutation $\fr: \I \to \I $, which orders the features in descending order breaking ties according to some deterministic rule. That is, $\fr(i) < \fr(j)$ iff $\fa_i < \fa_j$.
\subsubsection{Insertion and deletion curves.}%\label{sec:curves}
Let $\mask: \mathbb{R}^d \times \pow(\I)  \to \mathbb{R}^d$, where $\pow$ denotes the power set, 
be some operator responsible for perturbing/masking features in $x$. The second argument specifies the features that should \emph{not} be perturbed. For example, the second argument could specify all pixels in an image $x$ that are not replaced by a constant when masking $x$.
%\footnote{In this study, we assume fixed length input vectors. However, the approach can also be adopted for variable length inputs. For example, the second argument could specify the set of elements (tokens) to be removed from an input sequence $x$.}
%
We define a function $\simm: \mathbb{R}^d \times \mathbb{R}^d  \to [0,1]$ 
%with $\simm(x,x)=1$,
that measures the quality of the perturbed input with respect to the original input and the model.
This could be the logit of the predicted class \cite{petsiuk2018rise} or, in an unsupervised setting, 
the similarity of latent space representations \cite{wickstrom2023relax,nemu}. 

The insertion curve is constructed by iteratively adding features to the empty set of features in the order of increasing importance according to some attribution and measuring the difference between the model output of the reduced feature set and the full feature set. We define the \emph{insertion curve} value at a given number of features $i$ by
\begin{equation}
\label{eq:add_curve}
% \dc(i,\sigma)    =  \simm(x, \mask(x, \{\sigmaInv(j)\,|\, 1\le j\le d - i    \}  )\enspace,  
\ic(i, x, \fr)    =  \simm(x, \mask(x, \{\frInv(j)\,|\, 1\le j\le  i    \}  )\enspace,
\end{equation}
That is, $\ic(i, x, \fr)$ measures the similarity of the outputs of $\frozen$ given $x$ and the results of adding the most important features according to feature attribution $a$.
The corresponding  AUC (area under the curve) is given by 
\begin{equation}
\label{eq:add_curve_auc}
 \operatorname{AUC}_\text{ins}(x,\fa)  =  {\frac{1}{d}}\sum_{i=1}^d \ic(i,x,\fr)\enspace.
\end{equation}
We define the complementary \emph{deletion curve} resulting from iteratively removing features in the order of decreasing relevance according to some attribution as
\begin{equation}
\label{eq:del_curve}
 \dc(i,x,\fr)  =  1-\simm(x, \mask(x, \{\frInv(j)\,|\, 1\le j\le d - i\}  )  %\enspace.  
%\ac(i,\sigma)  =  \simm(x, \mask(x, \{\sigma(j)\,|\, 1\le j\le i\}  )\enspace. 
\end{equation}
and $\operatorname{AUC}_\text{del}$ accordingly.
%Note that this is different from what is typically referred to as an \emph{insertion curve} in XAI. The AUC of the addition curve is given by 
%\begin{equation}
%\label{eq:add_curve_auc}
% \operatorname{AUC}_\text{add}(x,\fa)  =  \sum_{i=1}^d \ac(i,x,\fr) .
%\end{equation}
From  $\ic(i,x,\fr) = 1-\dc(d-i,x,\fr)$ it follows that $\operatorname{AUC}_\text{ins}(x,\fa) =1-\operatorname{AUC}_\text{del}(x,\fa)$ if the same $\mask$ is used for both curves.
However, typically different $\mask$ operators are used, denoted by $\maskdel$ and $\maskins$ in the following. In \cite{petsiuk2018rise}, $\maskdel(x,T)_i=0$ and
$\maskins(x, T)_i=\operatorname{blur}(x)_i$ for $i\in [d]\setminus T$,
where $\operatorname{blur}$ is a
Gaussian smoothing operator (we refer to \cite{petsiuk2018rise} for a discussion).
We follow this standard when computing \emph{faithfulness}. A third option is random replacement
$\maskrand(x, T)_i\sim\mathcal{N}(x)$ for $i\in [d]\setminus T$,
where $\mathcal{N}(x)$ is a Gaussian distribution with mean and variance matching the input statistics.

Using the insertion curve criterion, the optimal feature ranking is the permutation $\sigma^*$ 
\begin{equation}
\label{eq:add_curve_auc_argmax}
\sigma^* = \operatorname*{\arg\max}_{\sigma\in\mathcal{S}_d} 
\sum_{i=1}^d \ic(i,x,\sigma)
%\operatorname{AUC}_\text{ins}(x,\fa)
\enspace,
\end{equation}
where $\mathcal{S}_d$ is the set of all permutations of $[d]$, the positive integers up to $d$. To keep the notation simple, we assume that all $\arg\max$ operations return a unique element. However, the following considerations also hold if a set of equivalent solutions is returned.
They also hold when optimizing the area under the deletion curve or a combination $\operatorname{AUC}_\text{ins}(x,\fa) -\operatorname{AUC}_\text{del}(x,\fa)$ (with different choices of $\mask$), which follows from the equivalence of deletion and insertion curves derived above.

%\subsubsection{Monotonicity} 
%We assume monotonicity in the set of important features with respect to the model $\frozen$. 
The question arises under which conditions $\sigma^*$ defines the optimal subset of $n$ features in terms of $\simm$ for any $n\in\I$.
Let the set  of the $n$ most important features be defined as  
\begin{equation}
\label{eq:top_k_for_a_set}
T^*_n = \operatorname*{\arg\max}_{T\in\pow(\{1,\dots,d\}) \wedge |T|=n}   \simm(x, \mask(x, T ) )\enspace. 
\end{equation}
Then \emph{monotonicity in the set of important features} is defined as the property
%\begin{equation}
%\label{eq:monotonicity}
$T^*_n\subset T^*_{n+1}$. %\enspace.    
%\end{equation}
This monotonicity is an implication of a common more restrictive assumption in the attribution literature called \textit{additive feature attribution } \cite{lundberg2017unified}, which states that any given model output can be approximated by a linear combination of input features. 

Assuming monotonicity, there exists a permutation $\sigma_n$ with $\{\sigma^{-1}_n(i)\,|\,i=1,\dots,n\}=T_n^*$. 
Definition \eqref{eq:top_k_for_a_set} implies
$\simm(x, \mask(x, \{\sigma^{-1}(j)\,|\, 1\le j\le n   \}  ) \le \simm(x, \mask(x, T^*_n ))$
for all permutations $\sigma\in\mathcal{S}_d$.
Thus, $\sigma_n$ maximizes \eqref{eq:add_curve_auc}, that is, $\sigma^*=\sigma_n$.
Therefore, under the monotonicity assumption, we have
%\begin{equation}
%\label{eq:monotonicity_implies_top_k}
    $T_n^*= \{{\sigma^*}^{-1}(i)\,|\,i=1,\dots,n\}$,
%\end{equation}
that is, optimizing the AUC gives the optimal subset of features for any subset size.

\subsubsection{Monte Carlo approximation.}
The optimization problem defined by \eqref{eq:add_curve_auc_argmax} is typically very high-dimensional in practice, for example, in computer vision tasks where $d$ is the number of pixels in an image.
Thus, evaluating the insertion curve at all points during optimization becomes infeasible. Therefore, we consider the Monte Carlo approximation of $\operatorname{AUC}_\text{ins}$ given by 
\begin{equation}
\label{eq:add_curve_auc_approx}
 \widetilde{\operatorname{AUC}}_\text{ins}(x,\fa)  = \frac{{1}}{|\J|} \sum_{i\in \J} \ic(i,x,\fr)  \enspace,
\end{equation}
where $\J\subseteq\I$ is drawn uniformly at random. 
We have
$
\E[\widetilde{\operatorname{AUC}}_\text{ins}(x,\fa)] = \operatorname{AUC}_\text{ins}(x,\fa)$,
where the expectation is over draws of $\J$, and accordingly
$
\sigma^* = \arg\max_{\sigma\in\mathcal{S}_d} \E \left[ \widetilde{\operatorname{AUC}}_\text{ins}(x,\fa)\right]$.
%We provide a bound for the sample complexity of the Monte Carlo approximation in \autoref{appendix:sampling_bound}.

\subsubsection{Bounds on sample complexity.}%{Bounds on sample complexity}
\label{sampling_bound}
The sample complexity of the AUC approximation can be bounded as follows:
%using results from \cite{serfling:74}:
\newcommand{\calC}{\mathcal{C}}
\begin{theorem}\label{thm:approx}
Given an input $x$ and an attribution $\fa$ with corresponding permutation $\fr$,
for randomly drawn indices $\J$ we have for $t>0$
\begin{equation}
    P\big[|\widetilde{\operatorname{AUC}}_\text{ins}(x,\fa) - 
    {\operatorname{AUC}}_\text{ins}(x,\fa)|>t\big]\le 2e^{\frac{-2nt^2}{(1-f^*_n)(b-a)}}\enspace,
\end{equation}
    where $n=|\J|$; $a=\min\calC$ and $b=\max\calC$ for $\calC=\{c_{\text{ins}}(i,x,\fa)\,|\,i=1,\dots,d\}$; and $f^*_n=(n-1)/d$.
\end{theorem}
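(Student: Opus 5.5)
The plan is to recognize $\widetilde{\operatorname{AUC}}_\text{ins}(x,\fa)$ as the sample mean of a simple random sample drawn \emph{without replacement} from a finite population, and then to apply Serfling's concentration inequality for such sampling. Concretely, I fix $x$ and $\fa$ (hence $\fr$). I read ``$\J\subseteq\I$ drawn uniformly at random'' as $\J$ being a uniformly random subset of $\I$ of fixed size $n=|\J|$. Equivalently, $\J$ consists of $n$ draws without replacement from the index set $[d]$. The population is the list of $d$ values $c_{\text{ins}}(1,x,\fr),\dots,c_{\text{ins}}(d,x,\fr)$, whose mean is exactly $\operatorname{AUC}_\text{ins}(x,\fa)$ by \eqref{eq:add_curve_auc}. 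The sample mean of the drawn values is exactly \eqref{eq:add_curve_auc_approx}, and it is unbiased, as already noted in the text. All population values lie in $[a,b]$ with $a=\min\calC$ and $b=\max\calC$.

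The key step is to invoke Serfling's result (Serfling, 1974, Cor.~1.1) for sampling without replacement from a finite population bounded in $[a,b]$. For every $t>0$ it gives the one-sided bound
\[
P\big[\widetilde{\operatorname{AUC}}_\text{ins}(x,\fa)-\operatorname{AUC}_\text{ins}(x,\fa)\ge t\big]\le \exp\Big(\frac{-2nt^2}{(1-f^*_n)(b-a)^2}\Big),\qquad f^*_n=\frac{n-1}{d}.
\]
I obtain the lower tail by applying the same inequality to the negated population $-c_{\text{ins}}(i,x,\fr)$. This population has the same range width $b-a$, so the bound is identical. A union bound over the two tails then produces the factor $2$.

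It remains to pass from $(b-a)^2$ to $(b-a)$ as written in the statement. Since $\simm$ takes values in $[0,1]$, we have $0\le b-a\le 1$ and hence $(b-a)^2\le b-a$. When $b>a$, this makes the exponent with $(b-a)^2$ at most the exponent with $(b-a)$, so Serfling's bound implies the stated (weaker) one. In the degenerate case $b=a$, all curve values coincide, so the deviation is identically $0$ and the probability vanishes trivially.

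The main obstacle is the Serfling inequality itself, if it is not taken as citable. In that case I would reproduce its proof. First, form the reverse martingale of sample means under sampling without replacement. Then apply Hoeffding's lemma to the martingale increments, whose conditional ranges shrink as the population is exhausted. Summing the variance proxies over the draws yields the finite-population correction factor $1-(n-1)/d$, and a Chernoff bound completes the argument. The bookkeeping of that correction factor is the delicate part. A fallback is Hoeffding's observation that sampling without replacement is dominated in convex order by sampling with replacement. This yields the same bound without the factor $(1-f^*_n)$, which is slightly weaker than the statement but a safe intermediate result.
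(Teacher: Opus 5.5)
Your proposal is correct and follows the paper's proof. Like the paper, you treat $\widetilde{\operatorname{AUC}}_\text{ins}$ as the mean of $n$ draws without replacement from the population $\{c_{\text{ins}}(i,x,\sigma_a)\}_{i=1}^d$, whose mean is $\operatorname{AUC}_\text{ins}$, and you apply Serfling's Corollary~1.1 to both tails (the lower one via the negated population) combined with a union bound.

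Your extra step is a genuine gain in care over the paper. Serfling's bound carries $(b-a)^2$, and the stated $(b-a)$ only follows because $\delta_\Phi$ maps into $[0,1]$, so $b-a\le 1$. The paper passes over this, and its remark that $\delta_\Phi$ need not be bounded holds only for the squared version.
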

\begin{proof}
    For a fixed input $x$ and an attribution $\fa$ with corresponding permutation $\fr$, we have $\operatorname{AUC}_\text{ins}(x,\fa)=\mu=\frac{1}{d}\sum_{c\in\calC}c$. Setting $S_n = C_1 + \dots + C_n$, where the $C_1 , \dots , C_n$ are $n$ samples drawn uniformly from $\calC$ without replacement, we have $\widetilde{\operatorname{AUC}}_\text{ins}(x,\fa)=S_n/n$. Now we apply a two-sided version of Corollary 1.1 from \cite{serfling:74} using $P_n=P[S_n/n-\mu\ge t]$ and $P_n=P[\mu - S_n/n\ge t]$, which completes the proof.
\end{proof}
Note that $\simm$ does not need to be bounded for the proof.
Setting $f_n^*=0$ gives an expression that resembles what one would expect from a Hoeffding bound considering independent random variables. However, we cannot simply assume that the $c_{\text{ins}}(i,x,\fa)$ terms in the AUC calculation are independent. Anyway, the elements in $\calC$ are fixed given an input $x$ and an attribution $\fa$ with corresponding permutation $\fr$. Thus, we consider the problem of drawing without replacement from a set of numbers.

We can also take the variance into account, and, in the same way as above, from the results in \cite{serfling:74} we get:
\begin{corollary}
    With assumptions and definitions as in \autoref{thm:approx} and its proof, we have
\begin{equation}
    P\big[|\widetilde{\operatorname{AUC}}_\text{ins}(x,\fa) - 
    {\operatorname{AUC}}_\text{ins}(x,\fa)|>t\big]\le 2{\frac{(1-f_n)s^2}{nt^2}}\enspace,
\end{equation}
    with $\mu=\frac{1}{|\calC|}\sum_{c\in\calC}c$, $s^2=\frac{1}{|\calC|}\sum_{c\in\calC} (\mu-c)^2$ and $f^*_n=(n-1)/(d-1)$.
\end{corollary}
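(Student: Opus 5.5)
The plan follows the proof of \autoref{thm:approx}, with Serfling's variance-based inequality for sampling without replacement in place of the Hoeffding-type one.

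\textbf{Setup.} Fix $x$ and $\fa$ with permutation $\fr$. As in the proof of \autoref{thm:approx}:
\begin{itemize}
\item the population is the finite multiset $\calC=\{\ic(i,x,\fr)\,|\,i=1,\dots,d\}$ of size $N=d$, with mean $\mu=\operatorname{AUC}_\text{ins}(x,\fa)$ and population variance $s^2$;
\item $\widetilde{\operatorname{AUC}}_\text{ins}(x,\fa)=S_n/n$, where $S_n=C_1+\dots+C_n$ and the $C_j$ are drawn uniformly from $\calC$ without replacement, $n=|\J|$.
\end{itemize}
No boundedness of $\simm$ is needed; only the finiteness of $\calC$ matters.

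\textbf{Step 1: the variance of $S_n/n$.} This is the classical finite-population sampling computation. By exchangeability, each $C_j$ has mean $\mu$ and variance $s^2$. For $j\ne k$,
\[
\operatorname{Cov}(C_j,C_k)=-\frac{s^2}{d-1}.
\]
This follows because $\sum_{j=1}^d C_j$ is the constant $d\mu$ when the whole population is drawn, so its variance is zero, and that forces the common pairwise covariance. Summing the variances and covariances gives
\[
\operatorname{Var}(S_n/n)=\frac{s^2}{n}\Bigl(1-\frac{n-1}{d-1}\Bigr)=\frac{(1-f_n)s^2}{n},
\]
with $f_n=(n-1)/(d-1)$ as in the statement.

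\textbf{Step 2: the tail bound.} Apply Chebyshev's inequality:
\[
P\bigl[|S_n/n-\mu|\ge t\bigr]\le \frac{(1-f_n)s^2}{nt^2}.
\]
This already implies the claimed bound, since the factor $2$ only loosens it. To match the paper's stated form and its reliance on \cite{serfling:74}, I would instead present it as a two-sided application of Serfling's one-sided variance bound. Take each tail $P[S_n/n-\mu\ge t]$ and $P[\mu-S_n/n\ge t]$ separately, bound each by $(1-f_n)s^2/(nt^2)$, and add them via a union bound. This mirrors the proof of \autoref{thm:approx} exactly.

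\textbf{Main obstacle.} The difficulty is bookkeeping rather than mathematics. Serfling's paper uses several finite-population corrections: $1-f_n^*$ with $f_n^*=(n-1)/N$ in the Hoeffding-type result, and $(N-n)/(N-1)$-type factors in the variance-based ones. One must check that the statement's $f_n=(n-1)/(d-1)$ is paired with the population variance normalized by $1/|\calC|$, and not by $1/(|\calC|-1)$.

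The direct computation in Step 1 settles this. With the $1/d$-normalized $s^2$, the exact correction factor is $(d-n)/(d-1)=1-(n-1)/(d-1)$, which matches the statement. So I would include that computation as a self-contained verification, independent of Serfling's conventions.
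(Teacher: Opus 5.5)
Your argument is correct, but its core differs from the paper's. The paper simply imports the variance-based inequality from \cite{serfling:74} and symmetrizes it by adding the two one-sided tails, exactly as in the proof of Theorem~\ref{thm:approx}. You instead derive the finite-population variance $\operatorname{Var}(S_n/n)=(1-f_n)s^2/n$ yourself, from exchangeability and the vanishing variance of the full-population sum, and then apply Chebyshev's inequality. Since $\{|S_n/n-\mu|>t\}\subseteq\{|S_n/n-\mu|\ge t\}$, this gives the claim directly.

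Your route buys three things. It is self-contained. It pins down the conventions explicitly: the $1/d$-normalized $s^2$ pairs with $f_n=(n-1)/(d-1)$, which also repairs the statement's slip of defining $f^*_n$ but using $f_n$. Your reading of $\mathcal{C}$ as a multiset of size $d$ is likewise what makes $|\mathcal{C}|=d$ and the normalizations consistent. Finally, it shows the bound holds without the factor $2$, because Chebyshev is already two-sided. The $2$ in the statement is an artifact of carrying over the union-bound step from the Hoeffding-type theorem.

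The paper's route is shorter and keeps the presentation parallel to the theorem. The cost is that superfluous constant and leaving the convention check to the reader. Your fallback presentation, a two-sided application of Serfling's bound, coincides with the paper's proof.
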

\subsection{A differentiable loss in terms of top-\textit{k} feature selection}
The first (top) $k$ elements of a feature attribution $\fa$ 
are given by $\text{top}_k(\fa) =  \{\frInv(i)\,|\,i=1,\dots,k \}$,
and we use the same notation for the $[k]\to[d]$ mapping 
%\begin{equation}
%\label{eq:top_k_feature_selection}
$\text{top}_k(\fa)(i) =  \fr(i)$.
%\enspace.
%\end{equation}
%that is, $\text{top}_k(\sigma*)=T_n^*$.
We can rewrite  \eqref{eq:add_curve_auc_approx} as
\begin{equation}
\label{eq:add_curve_auc_approx2}
 \widetilde{\operatorname{AUC}}_\text{ins}(x,\fa)  = \frac{{1}}{|\J|} \sum_{i\in \J} \ic(i, x, \text{top}_i(\fa))  \enspace.
\end{equation}
Instead of optimizing over the space of rankings,
we consider the real-valued optimization problem 
\begin{equation}\label{eq:astar}
    a^* = \operatorname*{\arg\max}_{a\in\mathbb{R}^d}   \widetilde{\operatorname{AUC}}_\text{ins}(x,\fa)\enspace.
\end{equation}
Again, we assume that $\operatorname*{\arg\max}$ returns a single solution to keep the notation simple. We have $\sigma^* = 
\operatorname*{\arg\max}_{\sigma\in\mathcal{S}_d} \widetilde{\operatorname{AUC}_\text{ins}}(x,\fa)
=  \sigma_{a*}$.
Plugging in \eqref{eq:add_curve}, the optimization problem \eqref{eq:astar} gives rise to the  loss function
\begin{equation}
\label{eq:losstop_nondiff}
 \loss(\fa\,|\,x)  = - \sum_{k\in \J} \simm(x,\mask(x,\text{top}_k(\fa))  %\enspace.
\end{equation}
to be minimized over $a\in\mathbb{R}^d$ for a training input $x$, where $\J$ is sampled anew in each evaluation of the loss.
Thus, instead of improving an attribution by optimizing the AUC of its insertion or deletion curve, we can instead optimize top-$k$ feature selection with uniformly sampled $k$ maximizing $\simm$.

Optimizing the AUC directly is a non-differentiable problem.
However, in the following we will derive  approximate gradients for the rephrased problem.
For gradient-based minimization of \eqref{eq:losstop_nondiff}, we need a differentiable approximation of the top-$k$ selection done by $\mask$.
Differentiable top-$k$ feature selection is commonly implemented using the Gumbel softmax operator \cite{gumbelsoftmax,L2X}. The Gumbel softmax operator represents a categorical distribution with a continuous and differentiable distribution \cite{maddison2017the}. However, the Gumbel softmax operator is insufficient for top-$k$ feature selection, as it samples only a single element from the categorical distribution. Previous work in the explainability literature \cite{L2X} addresses this limitation by applying the operator $k$ times to the network output, producing a set containing at most different $k$ elements. However, this approach becomes computationally infeasible for large $k$, since the number of operations scales linearly with $k$.
Alternatively, top-$k$ feature selection can be achieved by the ``Gumbel top-$k$ trick'' \cite{pmlr-v97-kool19a}. An ordered sample without replacement from a categorical distribution of size $k$ can be drawn by perturbing the (unnormalized) log-probabilities with values drawn from a standard Gumbel distribution and then selecting the top-$k$ largest elements. Let $a \in \mathbb{R}^d$ 
be a feature explanation, as, for example, produced by a neural network that provides an explanation for a given input $x$. We interpret $a$ as non-normalized log-probabilities that define the discrete probability distribution $p$ over $\I$ with 
\begin{equation}
    p(i) = \frac{\exp(a_i)}{\sum_{j \in \I} \exp(a_j)}
    \enspace.
\end{equation}
Let $\gum \in \mathbb{R}^d$  be a random vector drawn from a standard $d$-dimensional Gumbel distribution. Then $\text{top}_k(\fa + \gum)$
gives an ordered sample of $k$ elements drawn without replacement from the distribution $p$.
For a proof, we refer to \cite{pmlr-v97-kool19a}.
In contrast to iterative applications of the Gumbel softmax, this sampling can be efficiently computed for large $k$, making it suitable for high-dimensional data. 
Thus, \eqref{eq:losstop_nondiff} becomes
\begin{equation}
\label{eq:losstop}
 \ell(\fa\,|\,x, g, k)  = - \simm(x,\mask(x,\text{top}_k(\fa + g))  %\enspace.
\end{equation}
for a feature attribution $\fa$ given the input $x$, a number of features $k$, and a sample $g$ from the standard
$d$-dimensional Gumbel distribution.

In summary, by adopting differentiable top-$k$ feature selection via the Gumbel top-$k$ trick, we are able to create an efficient differentiable approximation to $\mask$, which allows us to optimize the  loss in \eqref{eq:losstop}.
\autoref{alg:top-k_operator-main} shows the implementation of the $\mask$ operator in pseudocode, details are provided in \autoref{appendix:omega_pseudo_code} in the supplementary material.
%, we discuss the concrete implementation of $\mask$ in more detail, including pseudocode. 

\begin{algorithm}[tb]
\caption{Omega operator in Python/PyTorch pseudocode, details are provided in \autoref{appendix:omega_pseudo_code} in the supplementary material.}
\label{alg:top-k_operator-main}
%\begin{center}
\begin{lstlisting}[style=python]
def omega_operator(A, x, k, R):
    # stabilize training
    A = A - A.logsumexp()
    # sample Gumbel noise
    U = torch.rand_like(A)
    G = -torch.log(-torch.log(U)) 
    # Gumbel perturbed sorting 
    A, indices = torch.sort(A + G, descending=True)
    # soft thresholding
    A = torch.sigmoid(A - A[k])
    # hard thresholding
    mask = A - A.detach()
    # only keep top k elements
    mask[:k] += 1
    # reorder mask to original index order
    M = torch.zeros_like(A)
    M[indices] = mask
    # generate perturbed input
    return x * M + R * (1 - M)  
\end{lstlisting}
%\end{center}
\end{algorithm}
\subsection{\ranem}\label{sec:ranem}
The loss function and its approximated gradient computation can be used as an additional term in various attribution algorithms. 
To show their effectiveness, we evaluate them within the NEM framework without additional loss terms.

The $\mask$ operator does not have any trainable parameters and simply selects an ordered subset from a given feature attribution in a differentiable way. 
We can combine it with  NEM-U in a straight-forward way. We feed the explanation produced by $\masknet$ into Algorithm \ref{alg:top-k_operator-main}. Then we can train $\masknet$ end-to-end by gradient-based optimization of \eqref{eq:losstop}. 
That is, for each input $x$ during training, we compute $\frac{\partial}{\partial\w} \ell(\masknet(x)\,|\,x, g, k)$ for randomly drawn $g$ and $k$, where $\w$ denotes the parameters (weights) of $\masknet$.
We refer to this new approach as \textit{Ranking-NEM} (\ranem), because it ranks all input features via the $\masknet$ output processed by Algorithm \ref{alg:top-k_operator}, which stand in contrast to previous NEM variants.  
\ranem{} applies binary masks during training (i.e., $\texttt{M}$ in \cref{alg:top-k_operator} is in $\{0,1\}^d$), while previous NEM methods work with continuous masks $m$, typically $m=a$ directly, and evaluate $\frozen$ on the element-wise product $x\odot m$. This is a crucial difference. While $x\odot m$ may be a valid input when processing images and other fixed-length continuous signals, this is not necessarily the case for other modalities, such as sequences of discrete symbols.
This makes the \ranem{} approach more flexible, for example, applicable in natural language processing.

%\paragraph{\ranem{} for classification.}
To train a \ranem{} for supervised classifiers, we 
follow \cite{petsiuk2018rise} and
set $\simm$ in \eqref{eq:losstop} to  the probability of the most likely class of the input image:
\begin{equation}
    \delta_\Phi(x,x')= [\Phi(x')]_c \text{ with } 
    c=\arg\max_i \frozen(x)_i
\end{equation}
During training, for each input $x$, $\mask$ is chosen uniformly at random to be $\maskins$, $\maskdel$, or $\maskrand$ (see \autoref{sub_sec:ranking_via_feature_selection}).

\section{Experiments \& results}
\label{sec:experiments_and_results}
To evaluate the \ranem{} method, we performed a series of experiments.

\subsection{Evaluated architectures and dataset}
We applied a diverse set of explanation methods to four deep learning models. Specifically, we studied three convolution-based architectures, ResNet50 \cite{he2016deep}, ConvNeXt (small) \cite{liu2022convnet}, and VGG16 \cite{simonyan2014very}, and one transformer architecture, ViT \cite{dosovitskiy2020image}. All models were sourced from the Timm library \cite{rw2019timm} and pretrained on the ImageNet \cite{imagenet} training split.
We focused on attribution methods for supervised image classifiers \cite{abhishek2022attributionbasedxaimethodscomputer}. We ran evaluations using 1,000 images sampled from the ImageNet \cite{imagenet} validation data. For training NEMs, we sampled 10,000 images from the same split, discarding images that overlapped with the evaluation dataset.

\subsection{Baseline methods}
To compare our proposed methodology, we evaluated nine explanation methods that included gradient-based and occlusion-based approaches. From gradient-based methods, we considered \saliency{} \cite{Saliency}, \gradshap{} \cite{lundberg2017unified}, \gradcam{} \cite{chattopadhay2018grad}, and \intgrad{}\cite{sundararajan2017axiomatic}. From occlusion-based methods, we examined \rise{} \cite{petsiuk2018rise}, \spm{} \cite{fong2019understanding}, Information Bottleneck Attribution (\iba) \cite{Schulz2020Restricting}, and the recent \nemt~\cite{moller2024nemt}.
For \iba{}, we picked the \textit{per-sample variant}, optimizing directly on the input image as recommended in the literature to achieve the highest faithfulness. We could not find any work on \iba{} for transformer-based image architectures and therefore did not generate results for explaining ViT with this method. For the \spm, we enforced an area constraint of 10\%.

\subsection{FORgrad}
In \cite{pmlr-v235-muzellec24a}, it has been shown  that the poor performance of  gradient-based attribution methods applied to CNNs is likely due to the pooling operations leveraged in CNNs. The authors  introduce FORgrad, which is a method for improving the faithfulness of gradient based methods by filtering out high frequency noise in the gradient. We add the FORgrad method to the gradient-based methods in our study and add the resulting algorithms to our comparison.

\subsection{NEM training and post processing}We studied two different NEM architectures, \ranem{}, and \nemt. All NEM masking networks $\masknet$ were standard U-Net decoder blocks, consistent with previous work \cite{nemu,moller2024nemt}, to ensure a fair comparison. The trainable neural architectures used for the two methods were identical, which means that the number of trainable parameters was the same. All masking networks were trained for 10 epochs on the training data using the Prodigy optimizer \cite{mishchenko2024prodigyexpeditiouslyadaptiveparameterfree} alongside a cosine annealing learn-rate scheduler. %To enhance training stability for 
In \ranem, we sampled six values of $k$ with replacement for each input $x$ in a batch and optimized for each sampled value, that is, we considered six loss terms \ref{eq:losstop} for each $x$.  We set $\lambda = \frac{1}{50}$ in the \nemt{} loss function.  A reproducibility checklist can be found in  \autoref{appendix:repro_check_list} and a  hyperparameter study in \autoref{appendix:training_nem_models} of the  supplementary material.

\subsection{Performance metrics}
We compared the different XAI methods using metrics for faithfulness, complexity, and inference time.
Faithfulness was assessed using insertion and deletion curves, as is commonly done in the XAI literature \cite{petsiuk2018rise,pmlr-v235-muzellec24a,fong2019understanding}. We use the variant proposed by \cite{pmlr-v235-muzellec24a}. Specifically, for each attribution-image-model combination, we compute the insertion and deletion curves and calculate the respective AUCs using 
$\maskdel(x,T)_i=0$ and
$\maskins(x, T)_i=\operatorname{blur}(x)_i$ for $i\in [d] \setminus T $. The final faithfulness metric is obtained by subtracting the deletion curve AUC from the insertion curve AUC. 

Previous work  has shown that  measuring faithfulness using deletion  and insertion curves can be sensitive to the choice of hyperparameters \cite{wickstrøm2024flexibilitymanipulationslipperyslope}. This is why we use two different choices of perturbation during our evaluation. To further explore how this might affect our results, we measure how different perturbation choices during faithfulness evaluation lead to different scores. 

There exist alternative measures of faithfulness, especially the Minimal Deletion Set metric  \cite{fong2017interpretable}, which measures the smallest perturbation that changes the predicted class of the classifier. Thus, we also studied this metric for the best-performing methods.

To further compare the attribution methods considered in this study,  we measure their robustness to (unimportant) noise using the Average Sensitivity metric \cite{Robustness}. A common sanity check of attribution methods is to measure how much the attribution for the same input changes when the model is 
%progressively 
distorted by injecting random noise into the layers. We performed this check using a variant of the MPRT (Model Parameter Randomization Test \cite{Randomization}) from the Quantus library \cite{hedstrom2023quantus},
where we measure the average spearman rank correlation between the attributions produced by the original model and a model with randomized weights. The full evaluation dataset (1000 images)  was used to measure Faithfulness, %Sparsity and Complexity,
whereas a subset of 100 images was used to measure Randomization and Robustness due to the metrics being computationally demanding. Finally, we measured the average time in seconds to generate attributions for the full evaluation dataset. Generally, low-latency methods are preferable, as real-time inference may be a requirement in practical applications.  All experiments were run on an RTX4090 NVIDIA GPU.

\subsection{Results}

%\label{sub_sec:results}
The results of our experiments comparing nine XAI methods are summarized in \autoref{tab:main_res},  the details are provided in \autoref{appendix:granular_results} of the supplementary material. All differences in faithfulness between \ranem{} and the other methods in \autoref{tab:main_res} are highly statistically significant (two-sided
paired Wilcoxon test, $p < 0.001$). In our empirical evaluation,  \ranem{} achieved the highest average faithfulness. It gave the highest faithfulness for the ResNet50, ConvNeXt, and ViT models, while ranking fifth on the VGG16 model. 

\begin{table}[ht]
\caption{Aggregated results from running different XAI methods on four different models using 1000 samples of the validation split of the ImageNet dataset. The explained models  are a ConvNeXt, a ResNet50, a VGG16, and a ViT. 
We  compared Faithfulness of \ranem{} with the other methods, and the differences in the table below are statistically highly significant (two-sided paired Wilcoxon rank-sum test, $p<0.001$).
Results for each individual model are given in \autoref{appendix:granular_results} in the supplementary material. $^\dagger$IBA was only evaluated for the three CNN architectures. $^\ddagger$\gradcam{} was only evaluated for Robustness on the three CNNs and \nemt{} was only evaluated on ViT and ResNet50. Please see \autoref{appendix:granular_results} for further information.}\vspace{5pt}
\label{tab:main_res}

\centering
\newcommand{\ze}{\ensuremath{\scriptstyle\boldsymbol{\approx}\boldsymbol{0}}}

\newcommand{\zesa}{\ensuremath{\scriptscriptstyle\mathrel{\substack{\scriptscriptstyle{\downarrow}\\0\\\scriptscriptstyle{\uparrow}}}}}

\newcommand{\zes}{\raisebox{0.5ex}{\scalebox{0.5}{\ensuremath{\scriptscriptstyle{\mathrel{\substack{\downarrow\\0\\\uparrow}}}}}}}

\begin{tabular}{@{}lrrrr@{}}
\toprule
\textbf{Method} & \textbf{Faith. $\uparrow$} &  \textbf{Robust. $\downarrow$} & \textbf{Rand. \zes} & \textbf{Time $\downarrow$} \\ \midrule
\rise            & 0.422                                         & 0.362                     & -0.017                              & 12.835                     \\
\gradcam{}${}^\ddagger$         & 0.390                          & 0.599                     & -0.082                              & 0.013                      \\
\intgrad         & 0.374                                         & 1.298                     & \phantom{-}0.016                               & 0.092                      \\
\spm & 0.386                                          & 0.730                      & \phantom{-}0.138                               & 4.672                      \\
\gradshap        & 0.343                                         & 1.350                      & \phantom{-}0.015                               & 0.014                      \\
\iba{}${}^\dagger$   & 0.455                                      & 0.189                    & -0.222                              & 0.240                       \\
\saliency        & 0.294                                        & 1.332                     & \phantom{-}0.075                               & 0.008                      \\
\nemt{}${}^\ddagger$            & 0.423                                          & 0.435                     & \phantom{-}0.186                              & 0.005                      \\
\ranem           & \textbf{0.494}                                          & \textbf{0.152}                     & \phantom{-}0.051                              & \textbf{0.004}                      \\ \bottomrule
\end{tabular}
\end{table}

With respect to Robustness, it ranks first overall indicating noise-resistant explanations. With a low Randomization score (0.051), \ranem{} proves to be sensitive to the underlying model as desired and thus passes the sanity check. 

\ranem{} and \nemt{} were the fastest methods, with our implementation of \ranem{} being the most efficient. The training 
times for different explained models and both NEM methods 
are reported in \autoref{tab:nem_training_time} in the supplementary material.

\begin{table}[ht!]
\centering
\caption{Average ratio of input that has to be perturbed before the prediction of the base model  differs from the the original.
Lower is better. Perturbation method is zero imputation.}
\label{tab:mr_res_minimal_deletion_set}
%\resizebox{0.75\columnwidth}{!}
{%
\begin{tabular}{@{}lrr@{}}
\toprule
                     & Resnet50 & ConvNeXt \\ \midrule
RISE                 & 0.242    & 0.438    \\
Integrated Gradients & 0.313    & 0.532    \\
Ra-NEM               & \textbf{0.168}    & \textbf{0.212}    \\ \bottomrule
\end{tabular}%
}
\end{table}

In terms of the Minimal Deletion Set metric  \cite{fong2017interpretable}, \ranem{} also achieves the best score compared to the other high-performing methods, see \autoref{tab:mr_res_minimal_deletion_set}. 

Visual examples of the various attribution methods are shown in \autoref{fig:resnet500} with additional images provided in  \autoref{appendix:model_images} in the supplementary material. We provide visual examples of \ranem{} performing relatively poorly in \autoref{appendix:poor_performance} of the supplementary material. 

For the CNN-based explained (frozen) models, the occlusion based XAI methods (e.g., \rise{}) compared favorably to the gradient based approaches (e.g., \intgrad{}) in our experiments.
Adding the FORgrad method indeed improves the faithfulness results of gradient-based methods, as can be seen in \autoref{tab:forgrad_results}, but not to the extent that they reach the quality of \ranem{}.

\begin{table}[H]
\centering
\caption{Comparing faithfulness on ResNet50 
for \ranem{} and gradient-based methods with and without FORgrad. We can see that FORgrad improves faithfullness at the cost of complexity.  \ranem{} is still outperforming the gradient-based methods even after they have been ``repaired'' by FORgrad.}
\vspace{5pt}
\label{tab:forgrad_results}
\begin{tabular}{@{}lr@{}}
\toprule
\textbf{Method}   & \textbf{Faithfulness $\uparrow$}   %\textbf{Time $\downarrow$} 
\\ \midrule
\gradshap          & 0.343                              %        & 0.014                      
\\
\gradshap{} (FORgrad) & 0.354                           %           & 0.012                      
\\
\intgrad           &  0.374                             %          & 0.092                      
\\
\intgrad{} (FORgrad)  & 0.407                           %        & 0.083                      
\\
\saliency          & 0.294                              %        & 0.008                      
\\
\saliency{} (FORgrad) & 0.308                           %         & 0.006                      
\\
\ranem             & \textbf{0.607}                     %                  & \textbf{0.004 }                     
\\ 
\bottomrule
\end{tabular}

\end{table}

\begin{figure*}[ht]
    \centering
    \includegraphics[width=\linewidth]{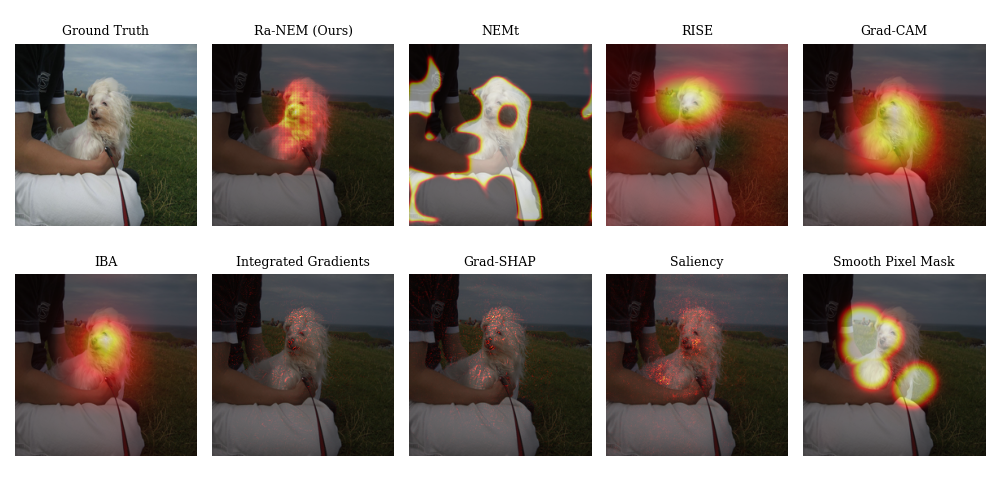}
    \vspace{-25pt}
    \caption{Model explanations generated by  different XAI methodologies (see appendix for more examples). The explained model is a ResNet50 architecture trained on the training split of the ImageNet dataset. The example image  is taken from the ImageNet validation dataset. Compared to other methods, such as \rise{} and \gradcam{}, \ranem{} generates a ranking more closely aligned with the central object (dog), while the others exhibit a circular bias, likely due to smoothing. This may explain the higher faithfulness of \ranem{} (see \autoref{sec:discussion}).}
    \label{fig:resnet500}
\end{figure*}

 The results of our hyperparameter study regarding the role of different perturbation methods are shown in \autoref{tab:mr_faithfulness_sensitivity}. The scores are indeed sensitive to the perturbation method, but the ranking appears to be stable across the models.  
\begin{table}[!ht]
\centering
\caption{Effect of the choice of perturbation  when calculating the deletion/insertion AUC, the shown values are $\text{AUC}_\text{ins} - \text{AUC}_\text{del}$ (ConvNeXt).}
\label{tab:mr_faithfulness_sensitivity}
%\resizebox{0.7\linewidth}{!}
{%
\begin{tabular}{@{}lrrr@{}}
\toprule
Method & Noise & Zero & Blur \\
\midrule

\rise    & 0.266 & 0.267 & 0.198 \\
\intgrad & 0.138 & 0.191 & 0.160 \\
\ranem{}   & \textbf{0.429} & \textbf{0.466} & \textbf{0.261} \\
\bottomrule
\end{tabular}%
}
\end{table}

The effect of the number of samples on faithfulness and training times for the \ranem{} method when explaining a ResNet50 model are shown in \autoref{tab:num_samples} in the supplementary material. Increasing $k$ can improve faithfulness but also increase training time. 
The stability of \ranem{} training is shown in \autoref{tab:nem_training_stability} in the supplementary material.

\section{Discussion}
\label{sec:discussion}

We derived a method for generating attributions by optimizing a Monte Carlo estimate of the AUC of the insertion/deletion curve via top-$k$ feature selection with uniformly sampled $k$ during training. 

To show the practical relevance of these results, we evaluated them within the NEM framework. 
Our experiments indicate that the resulting \ranem{} algorithm is a robust choice to provide faithful attributions as measured by deletion and insertion curve AUC. It has a low latency that allows for real-time applications, is robust to small input perturbations and sensitive to the underlying model. 

\subsection{Faithfulness} In our experiments, \ranem{} achieved on average the highest faithfulness for all image classifiers. There was some variance in performance between models, but \ranem{} had the highest faithfulness on three of four models. \ranem{} gave the best result on the ConvNeXt model, where it improved 0.095 over \nemt{} ranking second.  The worst relative performance of \ranem{} was on VGG16, where it was 0.139 behind the first place taken by \rise. 
The strong average performance of \ranem{} can be attributed to the inconsistent performance of other XAI methods when applied to different models, such as the poor performance of \rise{} on ViT and ConvNeXt. In contrast, our method gave faithful explanations for all models, even if it was outperformed for some individual models. To understand why our method performs well, it is therefore interesting to explore why the other methods vary in performance.\autoref{fig:resnet500} exemplifies that the \ranem{} attributions were more centered on the object (dog), while other occlusion-based methods have more circular smeared explanations. This is most likely due to smoothing, whether explicitly in the case of \iba{} \cite{Schulz2020Restricting} and \spm{} \cite{fong2019understanding} or implicitly by RISE \cite{petsiuk2018rise}. This bias toward spatially cohesive masks can be good for some CNN-based methods such as VGG16 and ResNet50, but it might hamper the ability to explain methods using patching such as ViT and ConvNeXt.  Previous work \cite{pmlr-v235-muzellec24a} has shown that gradient-based methods tend to exhibit reduced faithfulness when applied to CNN architectures due to the pooling operations.  Thus, it could be argued that the good overall performance of \ranem{} is partly due to not relying on spatial biases or gradients of the explained model.
We hypothesize that the most important reason for \ranem{} not achieving the highest faithfulness in all settings is that, unlike most other methods we compared against, \ranem{} does not optimize directly on the output when generating attributions after training. Instead, it relies solely on prediction, and therefore can sometimes focus on the wrong areas (see \autoref{appendix:poor_performance} in the supplementary material).

\subsection{Randomization and robustness} Our empirical results indicate that \ranem{} is the most robust of the studied methods, where robustness against input noise is measured by Average Sensitivity.
It even outperforms \iba{} and \gradcam{}, which use smoothing to eliminate some of the noise inherent to gradient-based methods. Furthermore, \ranem{} passed the randomization sanity check, since the MPRT score is close to zero (0.051). This means that the attribution depends on the underlying model mechanics instead of learning some shortcut such as segmenting the object in the image. Since \ranem{} only has access to the latent representations of the input and not the input itself, this is not surprising.

\subsection{Time}  \ranem{} was the fastest method for all models, closely followed by \nemt{}. This is in agreement with previous work on NEM methods \cite{nemu, moller2024nemt}, which outpaced other XAI methods when deployed. The NEM approaches only need a forward pass through the explained network $\frozen$ and the masking network $\masknet$, whereas the third fastest method \saliency{} needs to compute the gradient of the explained model. Since the $\masknet$  network is generally smaller than $\frozen$, a forward pass through both  $\masknet$ and $\frozen$ is  faster than a forward and backward pass through $\frozen$. %(even when performed sequentially).

\subsection{\ranem{} and previous NEM methods} Compared to previous NEM algorithms \cite{nemu,moller2024nemt}, our method offers advantages beyond the reported improved performance results. Specifically, the \ranem{} feature ranking and masking eliminate the need to specify a trade-off between complexity and accuracy. The binary masking facilitates adaptation to other input modalities, such as text or graphs, where partial occlusion is not well defined (see \autoref{sec:ranem}). Additionally, the absence of tuning parameters in the loss function simplifies adoption, as there is no need to adjust hyperparameters when $\frozen$ is re-trained or the framework is applied in a new setting.

\subsection{Using the  insertion/deletion curve as a target}
%A central assumption to this work is that the 
The insertion/deletion curves are widely considered as important measures of whether an attribution captures the underlying model mechanics in the XAI literature. Therefore, we studied the AUCs of these curves and showed how to efficiently optimize them.  
It has been pointed out that feature removal metrics are sensitive to the choice of perturbation, how many perturbation steps are taken during feature removal, and whether the order of feature removal is ascending or descending in the attribution values \cite{tomsett2020sanity}. We mitigate these issues by considering both the insertion and deletion curves, which rely on different perturbation schemes. Furthermore, \ranem{} uses three different perturbation schemes during training. %Furthermore, by measuring both the insertion and the deletion curve we have examples of deleting the largest or the smallest feature first. 
In addition, \ranem{} is optimized using a random $k$ during training, which means that it is not optimized with an a priori fixed number of steps. Our results show that these measures indeed lead to a high robustness of \ranem{}, indicating  that the results generalize. Finally, we would like to stress that optimizing the deletion/ insertion curve  can be seen as a generalization of the objective of finding a good trade-off between complexity and accuracy, a common way of creating explanations \cite{nemu, fong2019understanding,pmlr-v235-muzellec24a}, as the optimal insertion/deletion curve is one that has optimal accuracy for a given complexity level as defined by $k$.   

\subsection{Monotonicity}
Given monotonicity, we prove that the stochastic top-$k$ feature selection approximates faithfulness measured via
the deletion curve. Monotonicity is arguably unrealistic for many real world models. However, 
this assumption is not required when applying \ranem{} in practice. 
The proposed method is reasonable even if monotonicity does not hold: Our optimization process learns attributions such that, for arbitrary
 subset sizes, the selected features approximate the original input in terms of model prediction.

Monotonicity is implied by the more restrictive assumption of \textit{additive feature attributions} used in many attribution methods such as \gradshap{} \cite{lundberg2017unified} and LIME \cite{ribeiro2016should}. Evaluating and visualizing any XAI method using a single gradual map of relative feature importance (such as in \autoref{fig:resnet500}) implicitly assumes {monotonicity}. 
 
 It can be argued that all approaches providing a single attribution map for an input implicitly assume an ordering of the input features, which in turn implies monotonicity, otherwise, different maps for different $k$ must be reported.

\subsection{Fairness of comparison}
\spm{} and \nemt{} belong to a different category of explanation methods than the other approaches considered in this study, which may introduce bias in direct comparisons \cite{nemu,fong2019understanding}. Specifically, these methods are classified as set-of-feature explanations, whereas the others fall under additive feature explanations~\cite{nemu}. Set-of-feature explanations identify a minimal set of features sufficient to explain an outcome, but do not rank individual features. In contrast, additive feature explanations assign importance scores to all features, allowing for feature ranking. Comparing these approaches is inherently challenging, as they address different problems \cite{nemu}. We followed previous works \cite{pmlr-v235-muzellec24a, petsiuk2018rise, fong2019understanding} in our evaluation strategy to allow for a direct comparison, but future work should explore whether these results generalize to other domains. 

The \ranem{} method induces an ordinal ranking of the input features, meaning that the absolute values of the elements do not reflect the actual reduction in the classifier, but simply the correct ordering. Some faithfulness criteria such as \cite{wrongfaith_1} and \cite{wrongfaith_2} basically assume a linear relationship between classifier change and attribution. We have focused on faithfulness metrics that test ordinal relationships.

\section{Conclusions}
%\subsection{Summary}
%and future work}
%\label{sec:conclusion}
The areas under the insertion/deletion curves are often considered as important measures to evaluate the faithfulness of attribution-based XAI methods. This study contributes to a better conceptual understanding of this approach. We derived the sample complexity of a Monte Carlo approximation of the AUCs and an efficient differentiable objective function that allows one to
optimize the AUC directly. 
These general findings can be used to improve existing XAI methods. We added the new objective function to the NEM approach. This leads to the
\ranem{} algorithm, which performs top-$k$ feature selection and optimizes faithfulness to produce more accurate attributions.  In our experiments,  \ranem{} achieved state-of-the-art performance in terms of both faithfulness and robustness as measured by different performance metrics. Its low latency allows for real-time processing. 
Applications with real-time constraints or that require generating numerous explanations, such as online video analysis, benefit not only from the high accuracy and low latency of  \ranem{}, but also from its resource efficiency.
\ranem{} extends the NEM framework and makes it applicable to a wider range of input modalities, which is an interesting direction for future work.

\bibliographystyle{IEEEtran}
\bibliography{main}

\newpage 
\clearpage

\appendices
\section{Detailed  results}
\label{appendix:granular_results}
Tables \ref{tab:res_resnet} to \ref{tab:res_vit}  report the main results for each individual model explained.

% Please add the following required packages to your document preamble:
% \usepackage{booktabs}
\newcommand{\zes}{\raisebox{0.5ex}{\scalebox{0.5}{\ensuremath{\scriptscriptstyle{\mathrel{\substack{\downarrow\\0\\\uparrow}}}}}}}

\begin{table}[H]
\label{tab:resnet50_res}
\centering
\caption{Metrics from running nine different XAI methods on a ResNet50 architecture using 1000 samples of the validation split of the ImageNet dataset. We  compared Faithfulness, Complexity and Sparsity results of \ranem{} with the other methods, and the differences in the table below, excluding RISE when measuring faithfulness, are statistically highly significant (two-sided paired Wilcoxon rank-sum test, $p<0.001$).\label{tab:res_resnet}}
%\vspace{5pt}
%\resizebox{\linewidth}{!}
{%
\begin{tabular}{@{}lrrrr@{}}
\toprule
\textbf{Method} & \textbf{Faith. $\uparrow$} &
%\textbf{Complex. $\downarrow$} & \textbf{Sparse. $\uparrow$} &
\textbf{Robust. $\downarrow$} & \textbf{Rand. \zes} & \textbf{Time $\downarrow$} \\ \midrule
\rise{}         & 0.568                                           & 0.263                         & -0.034                      & 9.235                      \\
\gradcam{}      & 0.484                                           & 0.284                         & 0.240                        & 0.009                      \\
\intgrad{}      & 0.397                                              & 1.360                          & 0.017                       & 0.046                      \\
\spm{}          & 0.436                                              & 0.729                         & 0.032                       & 3.349                      \\
    \gradshap{}     & 0.389                                           & 1.088                         & 0.017                       & 0.008                      \\
\iba{}          & 0.531                                              & \textbf{0.184}                         & -0.068                      & 0.117                      \\
\saliency{}     & 0.396                                             & 0.893                         & 0.096                       & 0.007                      \\
\nemt{}         & 0.505                                             & 0.169                         & 0.302                      & 0.006                      \\
\ranem{}        & \textbf{0.607}                                             & 0.204                          & 0.091                      & \textbf{0.003}                      \\ \bottomrule
\end{tabular}
}
\end{table}
% Please add the following required packages to your document preamble:
% \usepackage{booktabs}

\begin{table}[H]
\centering
\caption{Metrics from running nine different XAI methods on a ConvNeXt architecture using 1000 samples of the validation split of the ImageNet dataset. We  compared Faithfulness, Complexity and Sparsity results of \ranem{} with the other methods, and the differences in the table below are statistically highly significant (two-sided paired Wilcoxon rank-sum test, $p<0.001$).}
%\vspace{5pt}
\label{tab:convnext_res}
%\resizebox{\linewidth}{!}
{%
\begin{tabular}{@{}lrrrr@{}}
\toprule
\textbf{Method} & \textbf{Faith. $\uparrow$} &  
\textbf{Robust. $\downarrow$} & \textbf{Rand. \zes} & \textbf{Time $\downarrow$} \\ \midrule
\rise{}         & 0.253                                           & 0.395                         & -0.015                      & 14.203                     \\
\gradcam{}      & 0.321                                             & \textbf{0.107}                         & -0.002                      & 0.015                      \\
\intgrad{}      & 0.332                                              & 1.756                         & 0.012                       & 0.081                      \\
\spm{}          & 0.251                                             & 0.744                         & 0.275                       & 5.026                      \\
\gradshap{}     & 0.311                                             & 2.073                         & 0.010                        & 0.012                      \\
\iba{}          & 0.311                                             & 0.079                         & -0.296                      & 0.486                      \\
\saliency{}     & 0.158                                             & 2.667                         & 0.008                       & 0.009                      \\
\nemt{}         & 0.44                                             & 0.532                         & 0.069                      & 0.006                      \\
\ranem{}        & \textbf{0.468}                                               & 0.168                         & -0.096                      & \textbf{0.005}                      \\ \bottomrule
\end{tabular}
}
\end{table}
% Please add the following required packages to your document preamble:
% \usepackage{booktabs}
% \usepackage[table,xcdraw]{xcolor}
% Beamer presentation requires \usepackage{colortbl} instead of \usepackage[table,xcdraw]{xcolor}
% Please add the following required packages to your document preamble:
% \usepackage{booktabs}

\begin{table}[H]
\centering
\caption{Metrics from running nine different XAI methods on a VGG16 architecture using 1000 samples of the validation split of the ImageNet dataset. We  compared Faithfulness, Complexity and Sparsity results of \ranem{} with the other methods, and the differences in the table below, excluding NEMt when measuring faithfulness, are statistically highly significant (two-sided paired Wilcoxon rank-sum test, $p<0.001$).}
%\vspace{5pt}
\label{tab:vgg_res}
%\resizebox{\linewidth}{!}
{%
\begin{tabular}{@{}lrrrrrr@{}}
\toprule
\textbf{Method} & \textbf{Faith. $\uparrow$} & 
%\textbf{Complex. $\downarrow$} & \textbf{Sparse. $\uparrow$} &
\textbf{Robust. $\downarrow$} & \textbf{Rand. \zes} & \textbf{Time $\downarrow$} \\ \midrule
\rise{}         & \textbf{0.532}                                          & 0.354                         & -0.017                      & 16.599                     \\
\gradcam{}      & 0.502                                            & 0.378                         & -0.485                      & 0.013                      \\
\intgrad{}      & 0.313                                             & 0.975                         & 0.017                       & 0.115                      \\
\spm{}          & 0.474                                             & 0.670                          & 0.033                       & 3.873                      \\
\gradshap{}     & 0.31                                            & 1.019                         & 0.017                       & 0.018                      \\
\iba{}          & 0.522                                            & 0.305                         & -0.301                      & 0.358                      \\
\saliency{}     & 0.296                                             & 0.923                         & 0.053                       & 0.006                      \\
\nemt{}         & 0.304                                            & 0.603                         & {\NAN}                      & 0.004                      \\
\ranem{}        & 0.395                                             & \textbf{0.179}                         & 0.182                      & \textbf{0.002 }                     \\ \bottomrule
\end{tabular}
}
\end{table}
% Please add the following required packages to your document preamble:
% \usepackage{booktabs}

\begin{table}[H]
\centering
\caption{Metrics from running nine different XAI methods on a ViT architecture using 1000 samples of the validation split of the ImageNet dataset. We  compared Faithfulness, Complexity and Sparsity results of \ranem{} with the other methods, and the differences in the table below, excluding Integrated Gradients when measuring faithfulness, are statistically highly significant (two-sided paired Wilcoxon rank-sum test, $p<0.001$).\label{tab:res_vit}}
%\vspace{5pt}
\label{tab:vit_res}
%\resizebox{\linewidth}{!}
{%
\begin{tabular}{@{}lrrrr@{}}
\toprule
\textbf{Method} & \textbf{Faith. $\uparrow$} 
%& \textbf{Complex. $\downarrow$} & \textbf{Sparse. $\uparrow$} 
& \textbf{Robust. $\downarrow$} & \textbf{Rand. \zes} & \textbf{Time $\downarrow$} \\ \midrule
\rise{}         & 0.336                                             & 0.438                         & -0.003                      & 11.303                     \\
\gradcam{}      & 0.254                                             & 1.626                         & {\NAN}                      & 0.012                      \\
\intgrad{}      & 0.456                                             & 1.104                         & 0.017                       & 0.124                      \\
\spm{}          & 0.384                                             & 0.777                         & 0.213                       & 6.441                      \\
\gradshap{}     & 0.362                                             & 1.221                         & 0.016                       & 0.017                      \\
\iba{}          & {\NAN}                                           & {\NAN}                        & {\NAN}                      & {\NAN}                     \\
\saliency{}     & 0.325                                             & 0.846                         & 0.143                       & 0.009                      \\
\nemt{}         & 0.443                                             & 0.246                         & 0.042                       & 0.006                      \\
\ranem{}        & \textbf{0.506}                                             & \textbf{0.058}                         & 0.029                       & \textbf{0.005}                      \\ \bottomrule
\end{tabular}
}
\end{table}

\section{Training NEM models}
\label{appendix:training_nem_models}
In this section, we include various experimental results dealing with training \ranem{} models.
In \autoref{tab:nem_training_time}, we explore the upfront training time needed for using \ranem{} and compare it to \nemt{}. We can see that \ranem{} takes longer to train, which is due to the additional samples used to increase the training stability of \ranem{}.
\begin{table}[H]
\centering
\caption{Time spent training NEM models for different explained models. The models are trained for 10 epochs on 10000 images extracted from the validation split of the ImageNet Model.}
\vspace{5pt}
\label{tab:nem_training_time}
% Please add the following required packages to your document preamble:
% \usepackage{booktabs}
\begin{tabular}{@{}lrr@{}}
\toprule
\textbf{Method} & {\textbf{NEMt (sec.) $\downarrow$}} & {\textbf{Ra-NEM (sec.) $\downarrow$}} \\ \midrule
ResNet50& 335& 700\\
ConvNeXt& 360& 869\\
VGG16   & 437& 980\\
ViT     & 403& 913\\ \bottomrule
\end{tabular}
\end{table}

\ranem{} is a trained method; therefore, its performance can potentially vary between training runs. To explore this, we trained a \ranem{}  to explain a ResNet50 multiple times and explored the variability in faithfulness between runs. The training protocol was identical to the one used for the main results.  The experimental results can be seen in \autoref{tab:nem_training_stability}. We can see that performance was fairly stable across runs with a maximum difference in faithfulness of 0.021. 
\begin{table}[H]
\centering
\caption{Variability in faithfulness for a  Ra-NEM explaining a ResNet50  across five runs with the same hyperparameters. The results are very stable. }
\vspace{5pt}
\label{tab:nem_training_stability}
% Please add the following required packages to your document preamble:
% \usepackage{booktabs}
\begin{tabular}{@{}lr@{}}
\toprule
{\textbf{Round}} & {\textbf{Faithfulness  $\uparrow$} } \\ \midrule
1                                  & 0.602                                     \\
2                                  & 0.581                                     \\
3                                  & 0.595                                     \\
4                                  & 0.597                                     \\
5                                  & 0.582                                     \\ \bottomrule
\end{tabular}

\end{table}

To understand the importance of choosing the number $k$ of samples used when training \ranem{}, we trained a \ranem{} to explain a ResNet50 model and varied the number of samples used throughout the runs. We subsequently measured training time and faithfulness to explore whether there exists a tradeoff between the two. The results are reported in \autoref{tab:num_samples}, where it can be seen that there is some trade-off between training time and faithfulness, which is controlled via $k$.

\begin{table}[H]
\centering
\caption{The effect of the number of samples on faithfulness and training time for the \ranem{} method when explaining a ResNet50 model. All hyperparameters except for number of samples are identical across runs. It can be seen that increasing $k$ can improve faithfulness but also increase training time. }
\vspace{5pt}
\label{tab:num_samples}
% Please add the following required packages to your document preamble:
% \usepackage{booktabs}
\begin{tabular}{@{}lrr@{}}
\toprule
{\textbf{\textit{k}}} & \multicolumn{1}{c}{\textbf{Training time $\downarrow$}} & {\textbf{Faithfullness $\uparrow$}} \\ \midrule
1                              & 252                                        & 0.520                                      \\
2                              & 310                                        & 0.560                                  \\
4                              & 463                                        & 0.596                                      \\
6                              & 700                                        & 0.607                                    \\
8                              & 967                                        & 0.617                                      \\ \bottomrule
\end{tabular}
\end{table}

Finally, to explore the effect of the choice of perturbations during training \ranem{}, we compare the faithfulness of different \ranem{} models when using different perturbations. The results are shown in \autoref{tab:mr_ablation_training_perturbation_choice}, where it can be seen that the downstream faithfulness is fairly robust to the choice of perturbation, indicating that the overall method is robust to this hyperparameter choice

\begin{table}[ht!]
\centering
\caption{Impact of \ranem{} training perturbation choices on  validation performance measured by faithfulness (ConvNeXt).
}
\label{tab:mr_ablation_training_perturbation_choice}
%\resizebox{0.8\columnwidth}{!}
{%
\begin{tabular}{@{}lrrrr@{}}
\toprule
             & blur                     & noise                     & zero imputation & all \\ \midrule
Faithfulness & \multicolumn{1}{r}{0.450} & \multicolumn{1}{r}{0.465} & \multicolumn{1}{r}{0.464}        & \multicolumn{1}{r@{}}{0.468} \\ \bottomrule
\end{tabular}%
}
\end{table}

\section{Pseudocode for the omega operator}
\label{appendix:omega_pseudo_code}

The pseudocode in PyTorch style in \cref{alg:top-k_operator} shows how we implemented $\mask(x,\text{top}_k(\fa + g))$ in the loss function.
The arguments \texttt{A}, \texttt{x}, \texttt{k}, and \texttt{R}
refer to $a$, $x$, $k$, and the replacement values are denoted by $R\in\mathbb{R}^d$.
We use the ``Gumbel top-trick'' for top-$k$ feature selection. 
We treat the attribution $a$ as logits.
To ensure numerical stability, we normalize the attribution map using a logsumexp operation, which introduces a dependency between the individual feature attributions to prevent divergence.
After the noisy top-$k$ selection, 
we map the attributions to $[0,1]$ using a sigmoid centered on the $k$-th attribution value.  
After that, we apply a hard thresholding mapping the first $k$ attribution values to one and the others to zero. This step is ignored in the gradient computation using the straight-through operator \cite{bengio2013estimating}. This is why we perform soft thresholding using a sigmoid to guide the optimization. Essentially, we try to get the masked values close to either zero or one depending on its position relative to the $k$ largest feature value using the differentiable sigmoid function, before we perform the non-differentiable discretization step.
Finally, the resulting mask is applied to the original input. The masked-out features are replaced depending on which $\mask$ is used, which is determined by the additional $R$ parameter in \cref{alg:top-k_operator}.
For example, $R = 0$ yields $\mask = \maskdel$ and $R = \operatorname{blur}(x)$ corresponds to $\mask = \maskins$.

\begin{algorithm}[tb]
\caption{Omega operator in Python/PyTorch pseudocode.}
\label{alg:top-k_operator}
%\begin{center}
\begin{lstlisting}[style=python]
def omega_operator(A, x, k, R):
    # stabilize training
    A = A - A.logsumexp()
    # sample Gumbel noise
    U = torch.rand_like(A)
    G = -torch.log(-torch.log(U)) 
    # Gumbel perturbed sorting 
    A = A + G
    A, indices = torch.sort(A, descending=True)
    # soft thresholding
    A = A - A[k]
    A = torch.sigmoid(A)
    # hard thresholding
    mask = A - A.detach()
    # only keep top k elements
    mask[:k] += 1
    # reorder mask to original index order
    M = torch.zeros_like(A)
    M[indices] = mask
    # generate perturbed input
    x_M = x * M + R * (1 - M)  
    return x_M
\end{lstlisting}
%\end{center}
\end{algorithm}

\FloatBarrier
\section{Reproducibility check list}
\label{appendix:repro_check_list}
Here we provide an overview of the hyperparameter used during the experimental evaluations. 

%\textbf{Reproducibility checklist}
\begin{itemize}
  \item \emph{Temperature}: $1$ (fixed for the entire training)
  \item \emph{Number of samples ($k$)}: $6$
  \item \emph{Sampling strategy}: Divide the range $[1, d]$, where $d = 224 \times 224$ is the number of spatial dimensions, into $k$ uniform bins and sample one value from each bin
  \item \emph{Perturbation masks}: Uniformly sampled per batch from
    \begin{itemize}
      \item zero imputation
      \item noise imputation
      \item blur imputation
    \end{itemize}
  \item \emph{Noise imputation value}: ImageNet-scaled normal noise
  \item \emph{Blur imputation kernel}: Gaussian kernel (size $11$, $\sigma = 5$)
  \item \emph{Explanation post-processing}: Min–max normalization
\end{itemize}

\section{Example images for all models}
In this section, we show additional randomly selected example images for all models. We refer to the image captions for a qualitative discussion of the results.
\label{appendix:model_images}
%\subsection{ResNet50}
\label{sub_sec:img_resnet50}
\begin{figure*}[ht]
    \centering
    \includegraphics[width=\linewidth]{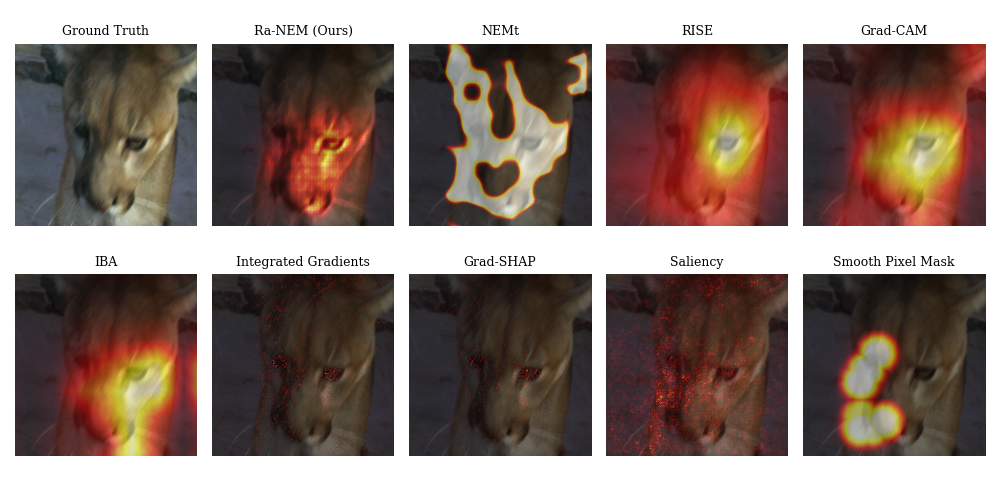}
    \vspace{-25pt}
    \caption{Model explanations generated by various different XAI methodologies. The explained model is a ResNet50 architecture trained on the training split of the ImageNet dataset. The example image  is taken from the validation split of the ImageNet dataset. Comparing \ranem{} with other occlusion-based methods, we observe that its finer granularity more precisely reveals specific image features. While all methods highlight the importance of the eye, \ranem{} identifies only a small region of the eye and its outline as necessary.}
    \label{fig:resnet501}
\end{figure*}
\begin{figure*}[ht]
    \centering
    \includegraphics[width=\linewidth]{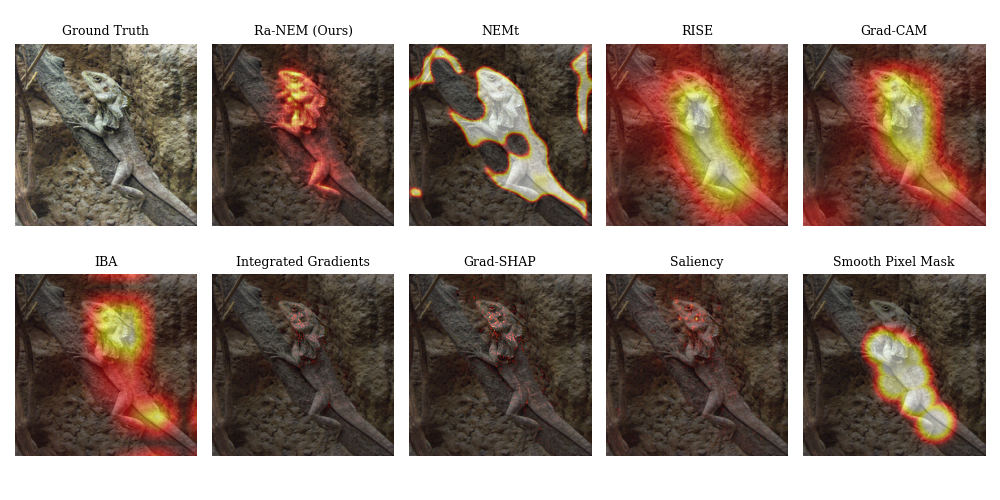}
    \vspace{-25pt}
    \caption{Model explanations generated by various different XAI methodologies. The explained model is a ResNet50 architecture trained on the training split of the ImageNet dataset. The example image  is taken from the validation split of the ImageNet dataset.}
    \label{fig:resnet502}
\end{figure*}
%\subsection{ConvNeXt}
\label{sub_sec:img_convnext}
\begin{figure*}[ht]
    \centering
    \includegraphics[width=\linewidth]{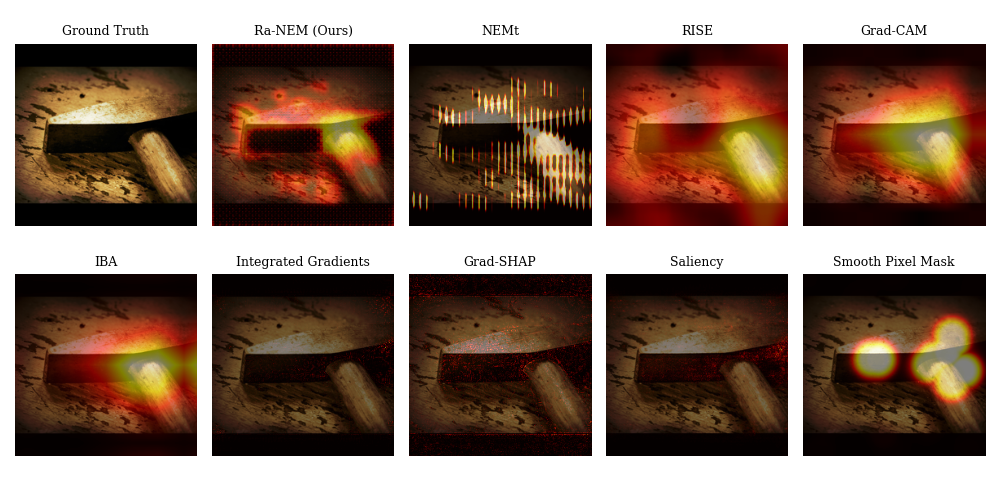}
    \vspace{-25pt}
    \caption{Model explanations generated by different XAI methodologies. The explained model is a ConvNeXt architecture trained on the training split of the ImageNet dataset. The example image  is taken from the validation split of the ImageNet dataset.  Occlusion-based methods generally agree on the area of interest, but only \ranem{} with its high granularity can outline specific features. For example, it highlights both the shaft of the hammer and the outline of the hammerhead, which indicates that most of the hammerhead is not needed for classification.}
    \label{fig:convnext0}
\end{figure*}
\begin{figure*}[ht]
    \centering
    \includegraphics[width=\linewidth]{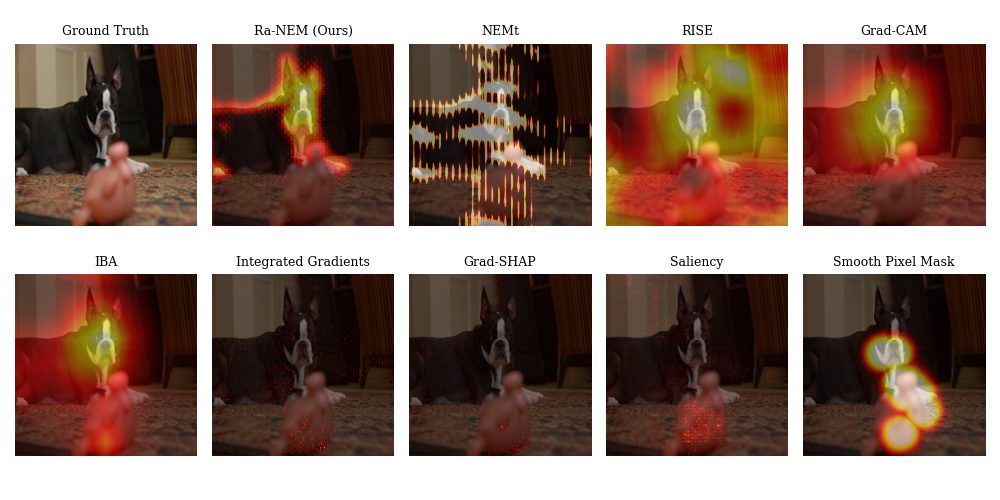}
    \vspace{-25pt}
    \caption{Model explanations generated by  different XAI methodologies. The explained model is a ConvNeXt architecture trained on the training split of the ImageNet dataset. The example image  is taken from the validation split of the ImageNet dataset. It can be seen that all occlusion-based methods agree that the face of the dog is important, but \ranem{} also emphasizes the ears.}
    \label{fig:convnext1}
\end{figure*}
\begin{figure*}[ht]
    \centering
    \includegraphics[width=\linewidth]{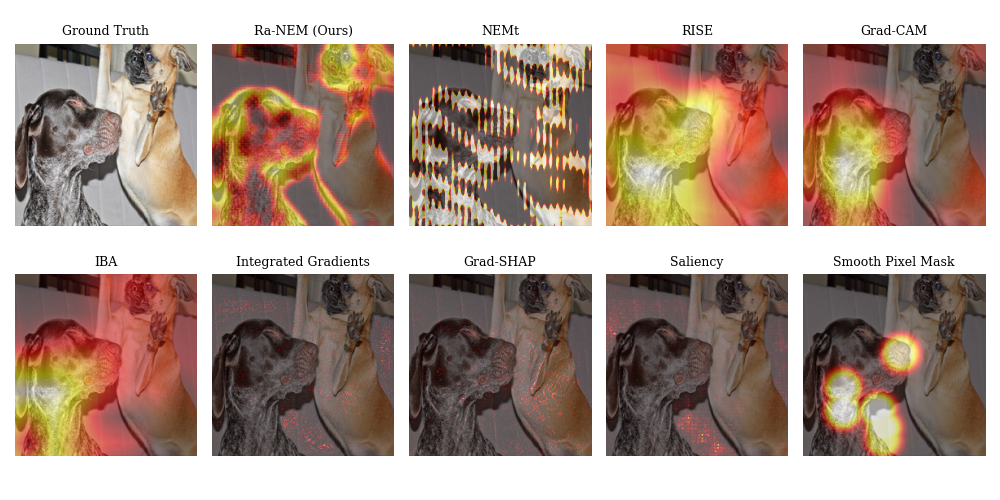}
    \vspace{-25pt}
    \caption{Model explanations generated by  different XAI methodologies. The explained model is a ConvNeXt architecture trained on the training split of the ImageNet dataset. The example image  is taken from the validation split of the ImageNet dataset. This example illustrates \ranem{}'s ability to discover fine details and as such indicate that the  outline of the dog is central to classification, whereas other methods biased toward smooth explanations need to indicate the entire animal to be important.  }
    \label{fig:convnext2}
\end{figure*}
%\subsection{VGG16}
\label{sub_sec:img_vgg16}
\begin{figure*}[ht]
    \centering
    \includegraphics[width=\linewidth]{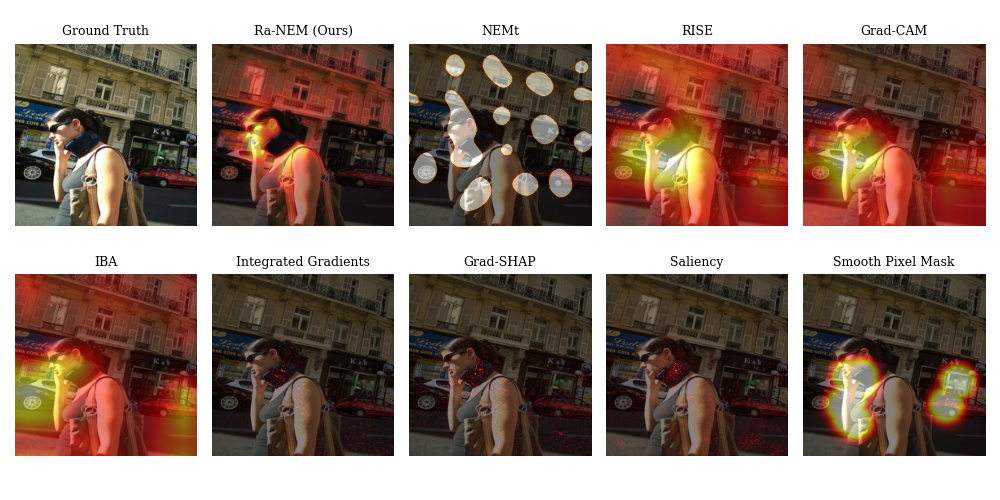}
    \vspace{-25pt}
    \caption{Model explanations generated by  different XAI methodologies. The explained model is a VGG16 architecture trained on the training split of the ImageNet dataset. The example image  is taken from the validation split of the ImageNet dataset. Notably, \ranem{} excludes the neck brace, whereas the smoothing of the other occlusion based methods somewhat include it as an important feature. Furthermore, it can be seen that the gradient based methods all generally target the neck brace.}
    \label{fig:vgg160}
\end{figure*}
\begin{figure*}[ht]
    \centering
    \includegraphics[width=\linewidth]{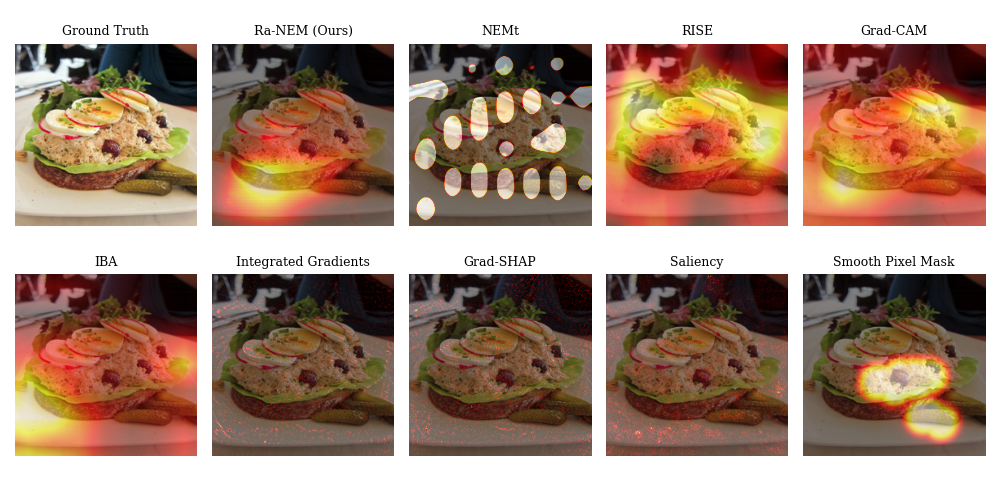}
    \vspace{-25pt}
    \caption{Model explanations generated by  different XAI methodologies. The explained model is a VGG16 architecture trained on the training split of the ImageNet dataset. The example image  is taken from the validation split of the ImageNet dataset. }
    \label{fig:vgg161}
\end{figure*}
\begin{figure*}[ht]
    \centering
    \includegraphics[width=\linewidth]{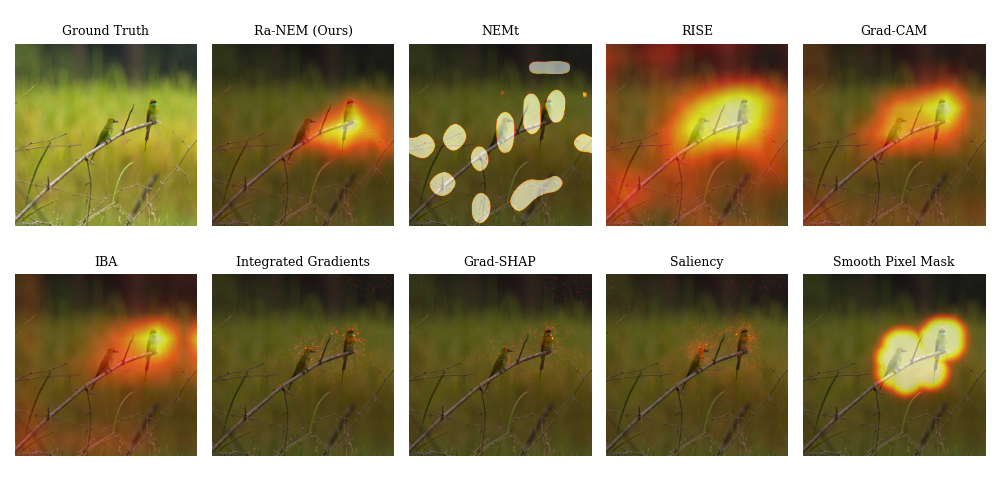}
    \vspace{-25pt}
    \caption{Model explanations generated by  different XAI methodologies. The explained model is a VGG16 architecture trained on the training split of the ImageNet dataset. The example image  is taken from the validation split of the ImageNet dataset. }
    \label{fig:vgg162}
\end{figure*}
\newpage
%\subsection{ViT}
\label{sub_sec:vit_images}
\begin{figure*}[ht]
    \centering
    \includegraphics[width=\linewidth]{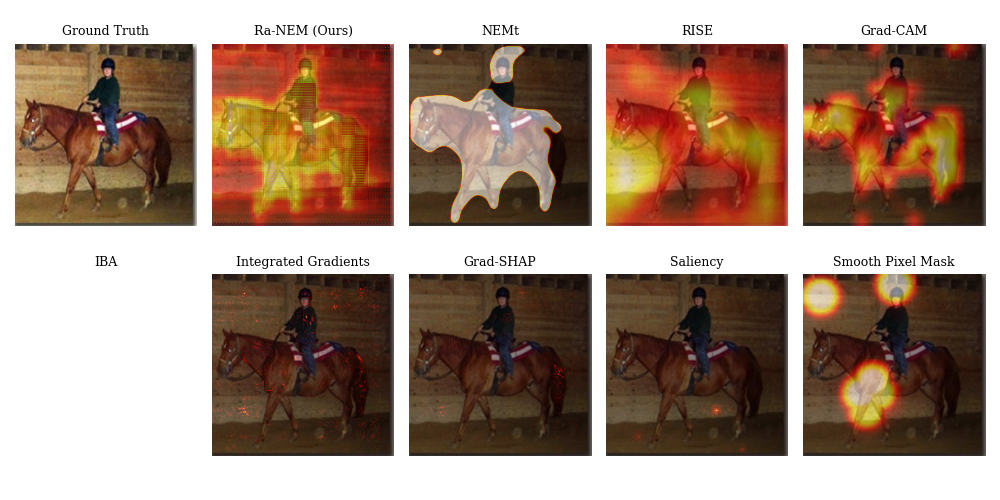}
    \vspace{-25pt}
    \caption{Model explanations generated by different XAI methodologies. The explained model is a ViT architecture trained on the training split of the ImageNet dataset. The example image  is taken from the validation split of the ImageNet dataset. There is no adaptation of IBA for transformer-based image models, so IBA results have not been generated. This exemplary image illustrates a number of important points. We can see that the  methods which leverage smoothing (\gradcam{} and \rise{}) generate attributions that are very scattered over the input, which might be due to an unfortunate combination of the smoothing and how the ViT processes an image. Given the ViT relies on attention mechanisms instead of convolutions, it could be that the architecture's bias toward spatial cohesion is much lower and therefore enforcing a spatial bias in the attributions might have an adverse effect. Additionally, while \intgrad{} may achieve high faithfulness, its attributions offer limited interpretability for end users. In contrast, \ranem{} provides a balanced trade-off between interpretability and faithfulness.}
    \label{fig:vit2}
\end{figure*}

\begin{figure*}[ht]
    \centering
    \includegraphics[width=\linewidth]{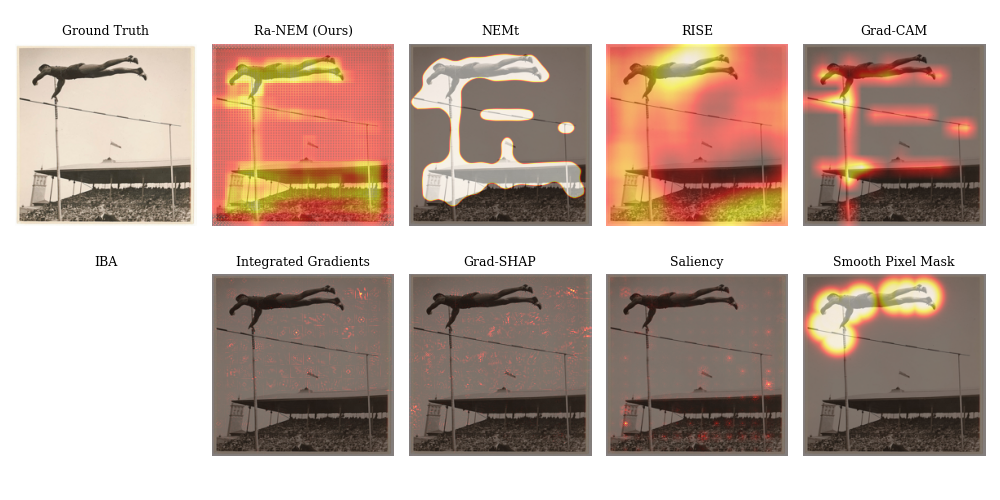}
    \vspace{-25pt}
    \caption{Model explanations generated by different XAI methodologies. The explained model is a ViT architecture trained on the training split of the ImageNet dataset. The example image  is taken from the validation split of the ImageNet dataset. There is no adaptation of IBA for transformer-based image models, so IBA results have not been generated. }
    \label{fig:vit0}
\end{figure*}

\begin{figure*}[ht]
    \centering
    \includegraphics[width=\linewidth]{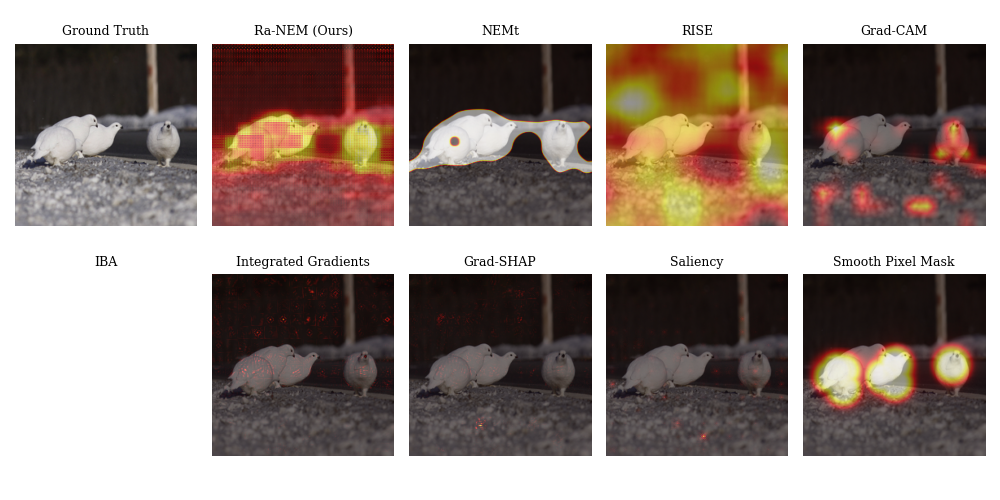}
    \vspace{-25pt}
    \caption{Model explanations generated by different XAI methodologies. The explained model is a ViT architecture trained on the training split of the ImageNet dataset. The example image  is taken from the validation split of the ImageNet dataset. As there is no adaptation of IBA for transformer-based image models, IBA results have not been generated. This example underpins many of the points already discussed in \autoref{fig:vit2}, a.i.{} the confusion of  methods leveraging smoothing and the limited interpretability of \intgrad{}, despite its higher faithfulness score. Additionally, \nemt{} occasionally generates very large attributions for the ViT.}
    \label{fig:vit1}
\end{figure*}

%\FloatBarrier
\section{Example images of poor \ranem{} performance}
\label{appendix:poor_performance}
In the following, we show examples where \ranem{} performed poorly compared to other methods to understand the limitations of the method.  These examples were found by selecting the images for which the faithfulness scores of \ranem{} were the lowest compared to the average faithfulness of all other methods.  

\begin{figure*}[ht]
    \centering
    \includegraphics[width=\linewidth]{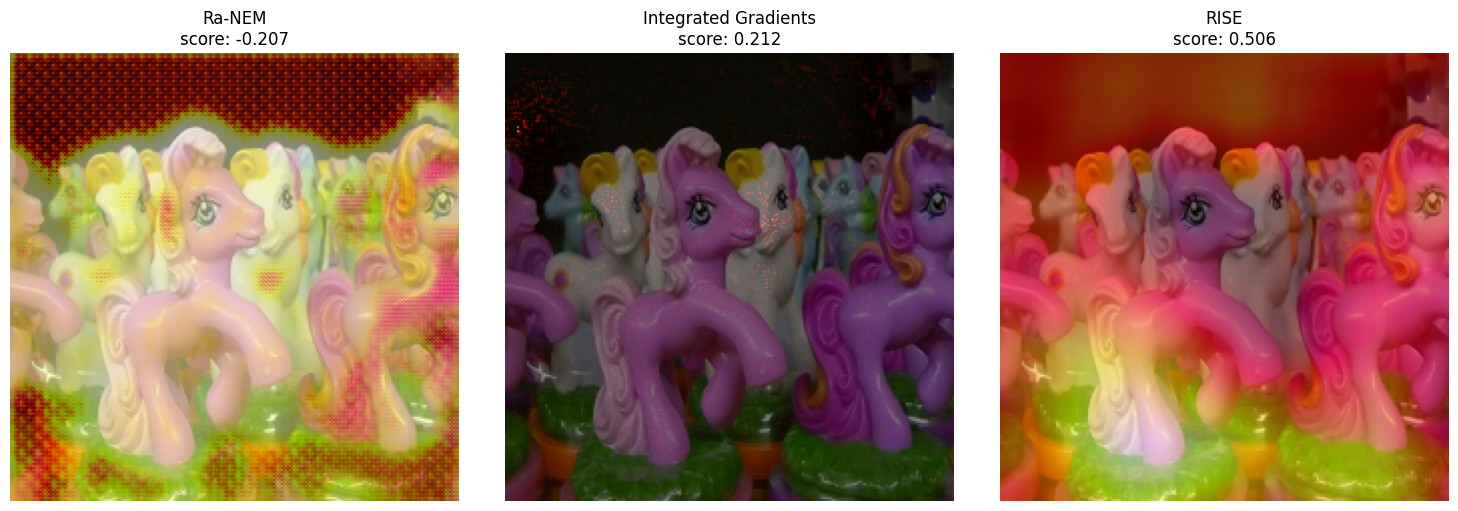}

    \caption{Example of image from ImageNet test set, where \ranem{} performs relatively poor compared to other methods. Score is Faithfulness. We include \intgrad{} and \rise{} for comparison. The explained model is a ConVNeXt trained on the training split of the ImageNet dataset.}
    \label{fig:bad_convnext1}
\end{figure*}

\begin{figure*}[ht]
    \centering
    \includegraphics[width=\linewidth]{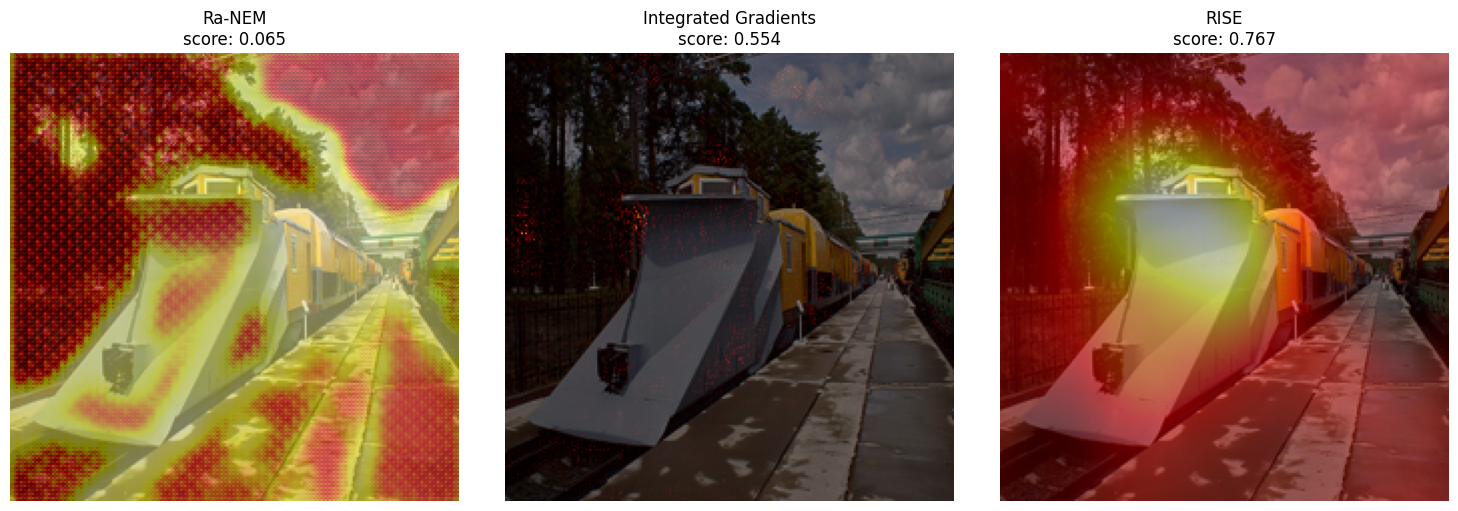}
    \caption{Example of image from ImageNet test set, where \ranem{} performs relatively poor compared to other methods. Score is Faithfulness. We include \intgrad{} and \rise{} for comparison. The explained model is a ConVNeXt trained on the trainings split of the ImageNet dataset. }
    \label{fig:bad_convnext2}
\end{figure*}

\begin{figure*}[ht]
    \centering
    \includegraphics[width=\linewidth]{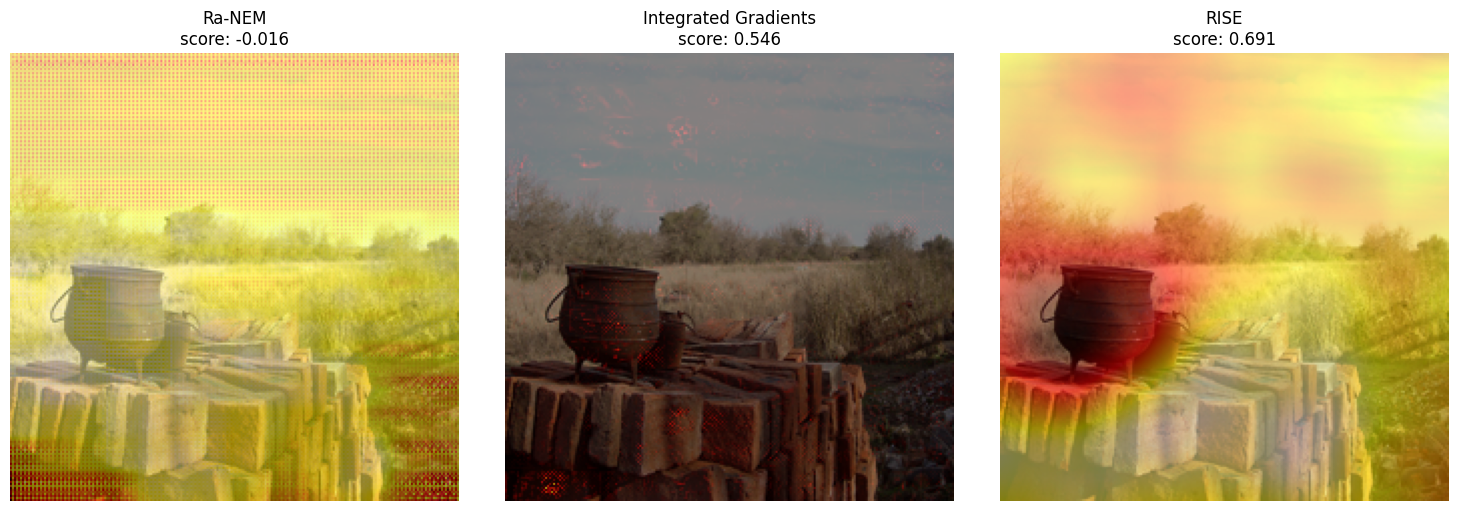}
    \caption{Example of image from ImageNet test set, where \ranem{} performs relatively poor compared to other methods. Score is Faithfulness. We include \intgrad{} and \rise{} for comparison. The explained model is a ViT trained on the trainings split of the ImageNet dataset. Here \ranem{} appears to not properly distinguish between different elements in the images.}
    \label{fig:bad_vit1}
\end{figure*}

\begin{figure*}[ht]
    \centering
    \includegraphics[width=\linewidth]{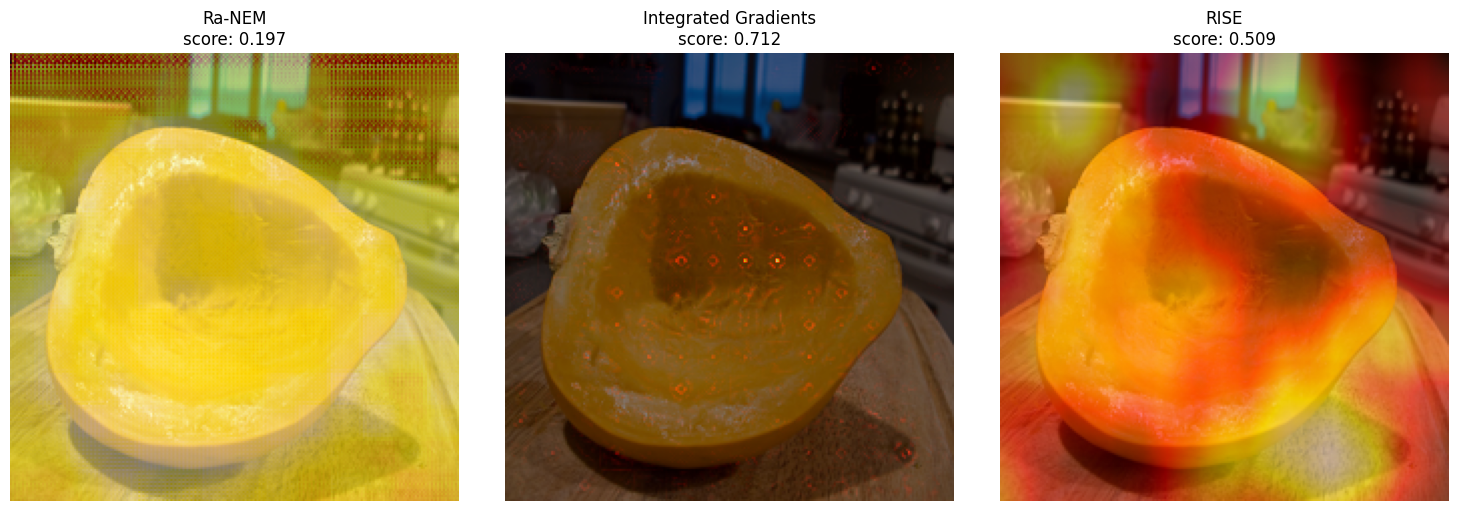}
    \caption{Example of image from ImageNet test set, where \ranem{} performs relatively poor compared to other methods. Score is Faithfulness. We include\intgrad{} and \rise{} for comparison. The explained model is a ViT trained on the trainings split of the ImageNet dataset. Here \ranem{} again does not appear to properly distinguish between objects in the image.}
    \label{fig:bad_cvit2}
\end{figure*}

\begin{figure*}[ht]
    \centering
    \includegraphics[width=\linewidth]{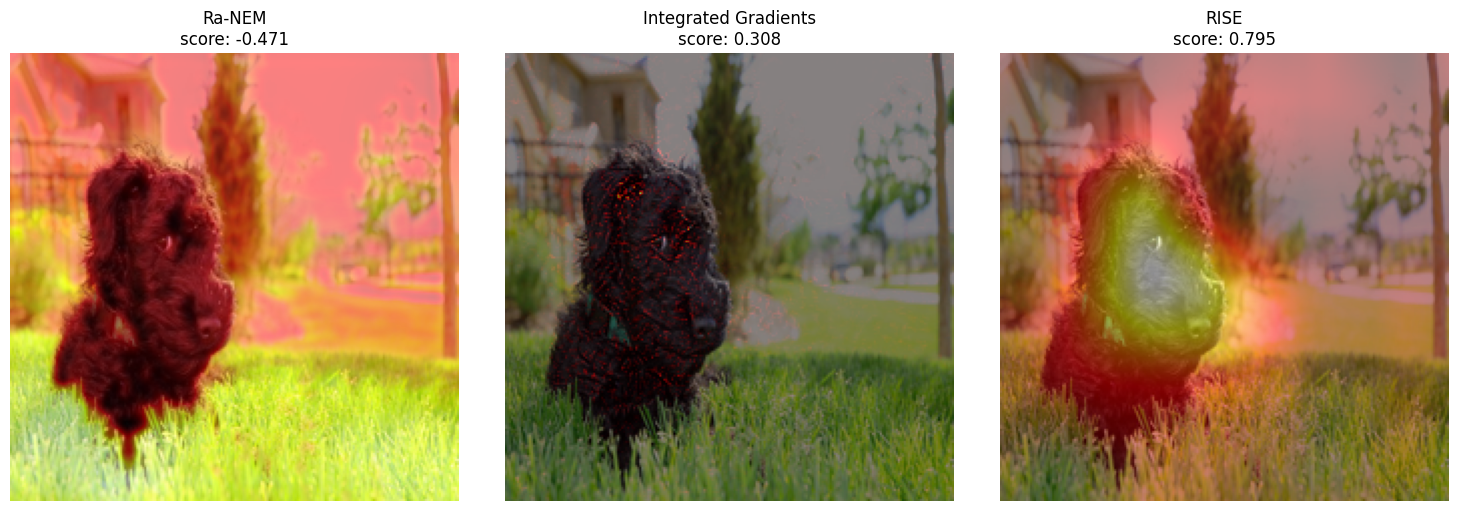}
    \caption{Example of image from ImageNet test set, where \ranem{} performs relatively poor compared to other methods. Score is Faithfulness. We include \intgrad{} and \rise{} for comparison. The explained model is a VGG16 trained on the trainings split of the ImageNet dataset.  Here it can be seen, that \ranem{} has switched foreground for background.}
    \label{fig:bad_vgg1}
\end{figure*}
\begin{figure*}[ht]
    \centering
    \includegraphics[width=\linewidth]{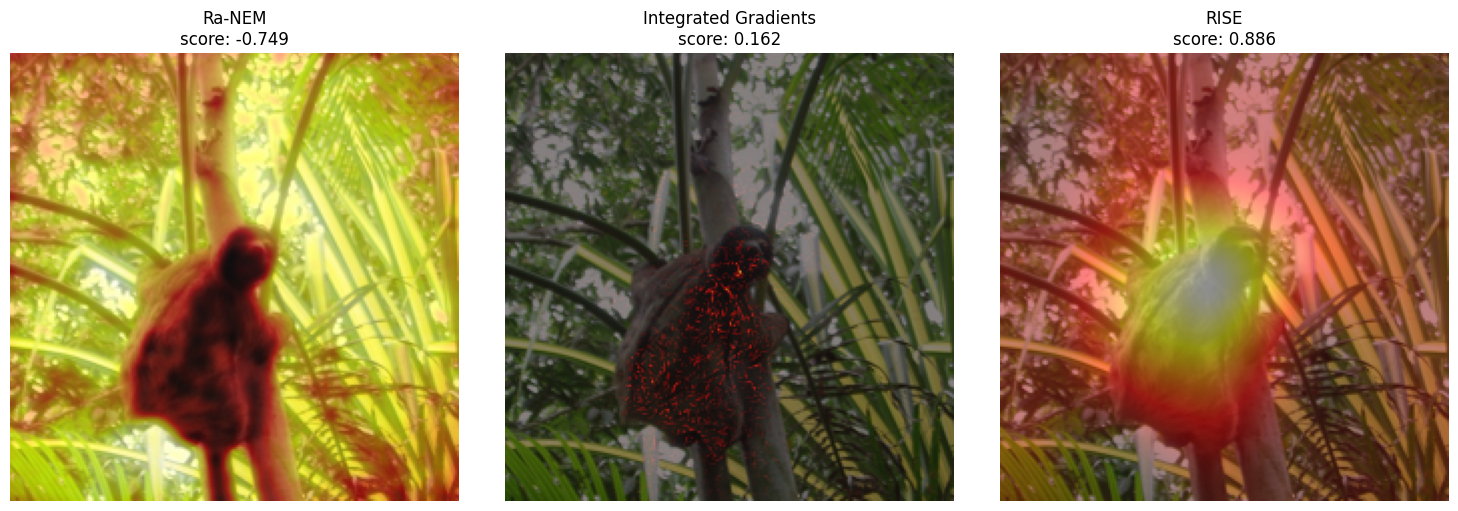}
    
    \caption{Example of image from ImageNet test set, where \ranem{} performs relatively poor compared to other methods. Score is Faithfulness. We include \intgrad{} and \rise{} for comparison. The explained model is a VGG16 trained on the trainings split of the ImageNet dataset. Here it can be seen, that \ranem{} has again switched foreground for background. }
    \label{fig:bad_vgg2}
\end{figure*}

\begin{figure*}[ht]
    \centering
    \includegraphics[width=\linewidth]{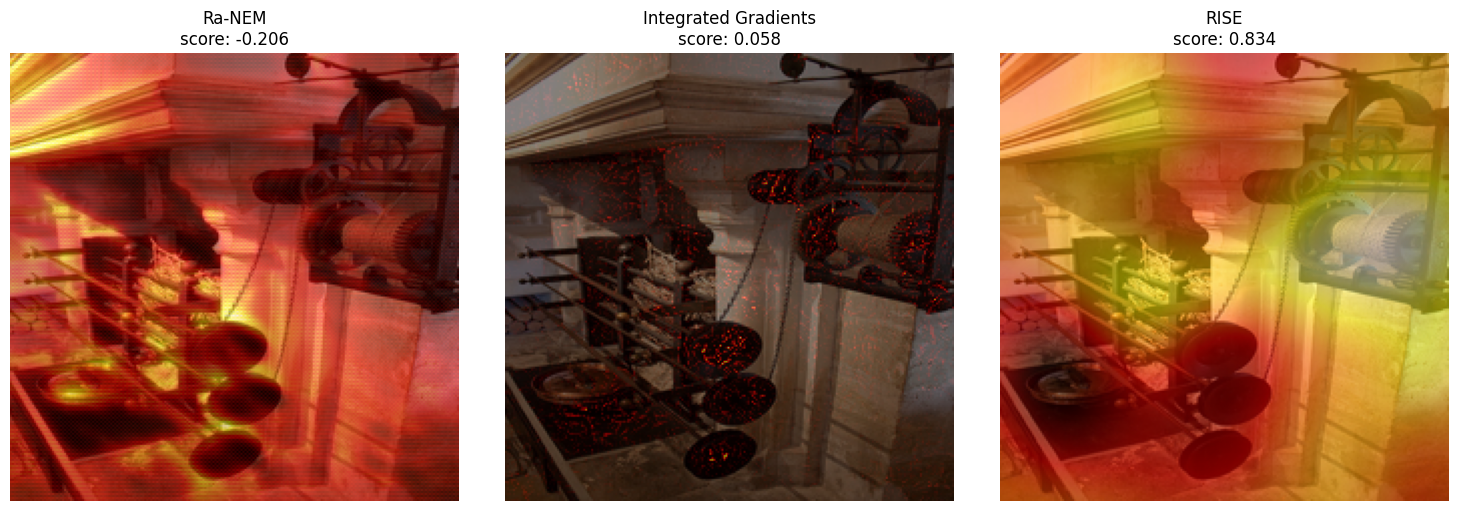}
    \caption{Example of image from ImageNet test set, where \ranem{} performs relatively poor compared to other methods. Score is Faithfulness. We include \intgrad{} and \rise{} for comparison. The explained model is a ResNet50 trained on the trainings split of the ImageNet dataset. Here \ranem{} predicts that the wrong object is the most important.}
    \label{fig:bad_resnet1}
\end{figure*}
\begin{figure*}[ht]
    \centering
    \includegraphics[width=\linewidth]{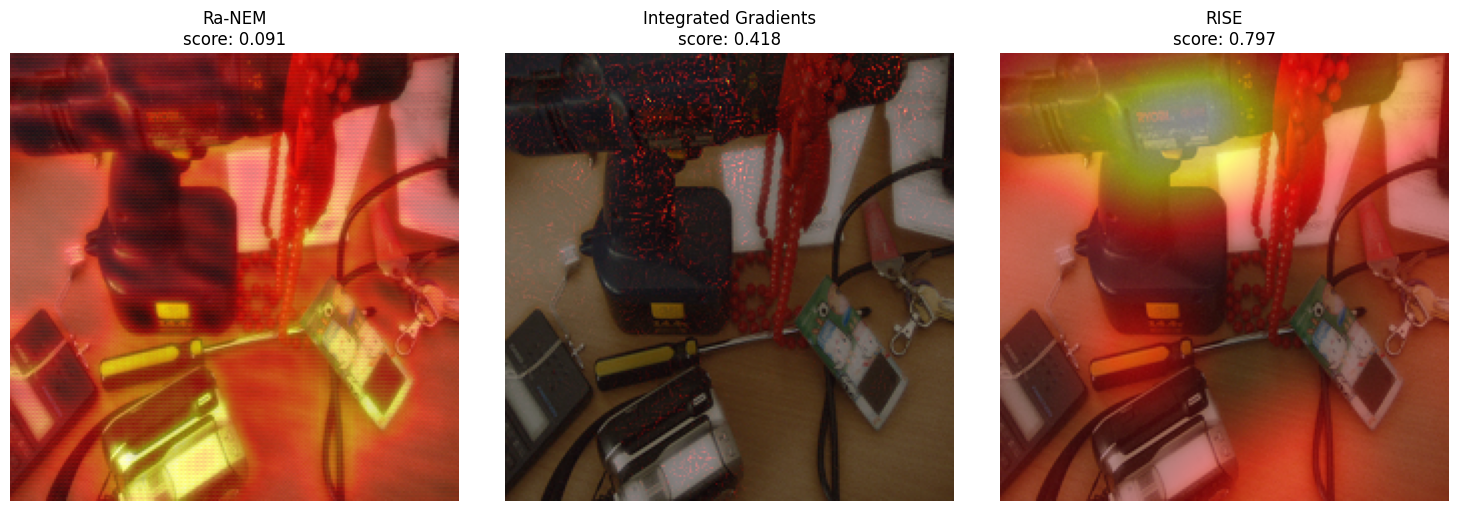}
    \caption{Example of image from ImageNet test set, where \ranem{} performs relatively poor compared to other methods. Score is Faithfulness. We include \intgrad{} and \rise{} for comparison. The explained model is a ResNet50 trained on the trainings split of the ImageNet dataset. Here \ranem{} again focuses on the wrong object in the image.}
    \label{fig:bad_resnet2}
\end{figure*}

\end{document}